\documentclass{article} %
\usepackage{iclr2027_conference,times}

\usepackage{hyperref}
\usepackage{url}

\title{Training Mixed-Monotone Neural Networks for Verification Without Set Propagation}

\author{Jonas Fingerhut,\quad
  Lukas Koller,\quad
  Matthias Althoff\\
  Technical University of Munich \\
  \texttt{\{jonas.fingerhut, lukas.koller, althoff\}@tum.de}
}

\usepackage[sets,operations,nn,colors,tikz]{cora-macs}
\usetikzlibrary{fadings,shadings,arrows.meta} %

\usepackage{amsmath}
\usepackage{amsthm}
\usepackage{amssymb}
\usepackage{mathtools}

\newtheorem{definition}{Definition}
\newtheorem{proposition}{Proposition}

\newtheorem{example}{Example}

\newtheorem*{rep@theorem}{\rep@title}
\newcommand{\newreptheorem}[2]{%
  \newenvironment{rep#1}[1]{%
    \def\rep@title{#2 \ref{##1}}%
    \begin{rep@theorem}}%
      {\end{rep@theorem}}}
\makeatother
\newreptheorem{proposition}{Proposition}
\newreptheorem{corollary}{Corollary}
\newreptheorem{theorem}{Theorem}

\let\originalleft\left
\let\originalright\right
\renewcommand{\left}{\mathopen{}\mathclose\bgroup\originalleft}
\renewcommand{\right}{\aftergroup\egroup\originalright}

\usepackage{nicefrac, xfrac}

\usepackage{aligned-overset}

\usepackage[inline]{enumitem}

\usetikzlibrary{fit,positioning,calc,matrix,arrows.meta,decorations.pathreplacing}

\usepackage{cleveref}
\crefname{figure}{Fig.}{Fig.}
\Crefname{figure}{Fig.}{Fig.}
\crefname{section}{Sec.}{Sec.}
\Crefname{section}{Sec.}{Sec.}
\crefname{equation}{}{}
\Crefname{equation}{}{}
\crefname{definition}{Def.}{Def.}
\Crefname{definition}{Def.}{Def.}
\crefname{proposition}{Prop.}{Prop.}
\Crefname{proposition}{Prop.}{Prop.}
\crefname{corollary}{Col.}{Col.}
\Crefname{corollary}{Col.}{Col.}
\crefname{theorem}{Thm.}{Thm.}
\Crefname{theorem}{Thm.}{Thm.}
\crefname{table}{Tab.}{Tab.}
\Crefname{table}{Tab.}{Tab.}
\crefname{algorithm}{Alg.}{Alg.}
\Crefname{algorithm}{Alg.}{Alg.}
\crefname{line}{line}{lines}
\Crefname{line}{line}{lines}
\crefname{example}{Ex.}{Ex.}
\Crefname{example}{Example}{Examples}
\crefrangeformat{line}{lines #3#1#4--#5#2#6}
\crefname{appendix}{Appendix}{Appendix}
\Crefname{appendix}{Appendix}{Appendix}
\crefformat{footnote}{#2\footnotemark[#1]#3}

\usepackage{float}
\usepackage{booktabs}
\usepackage{makecell}
\usepackage{multirow}
\usepackage{booktabs} %
\usepackage{etoolbox}
\robustify\bfseries
\usepackage{amstext} %
\usepackage{array} %
\newcolumntype{L}{>{$}l<{$}} %
\newcolumntype{R}{>{$}r<{$}} %

\usepackage{lipsum}
\usepackage{placeins}

\tikzexternaldisable

\newcommand{\cmat}[1]{\begin{bmatrix} #1 \end{bmatrix}} %

\newcommand{\abs}[1]{\left|#1\right|}%
\newcommand{\zeros}{\ensuremath{\mathbf{0}}}
\newcommand{\ones}{\ensuremath{\mathbf{1}}}

\newcommand{\Weights}{\ensuremath{W}} %
\newcommand{\bias}{\ensuremath{b}} %

\DeclareMathOperator{\relu}{ReLU}

\newcommand{\numGens}{\ensuremath{q}} %

\newcommand{\bigO}{\ensuremath{\mathcal{O}}}

\newcommand{\norm}[1]{\lVert#1\rVert}

\NewDocumentCommand{\nablaOperator}{e{_^}}{%
  \mathop{}\!%
  \nabla
  \IfValueT{#1}{_{\!#1}}%
  \IfValueT{#2}{^{#2}}%
}

\newcommand{\monoInc}{\ensuremath{g}}          %
\newcommand{\monoDec}{\ensuremath{h}}          %

\NewDocumentCommand{\monoNNInc}{o}{\ensuremath{\NN\IfValueT{#1}{#1}_{+}}}
\NewDocumentCommand{\monoNNDec}{o}{\ensuremath{\NN\IfValueT{#1}{#1}_{-}}}
\newcommand{\decompFun}[1]{\ensuremath{d}_{#1}}        %

\newcommand{\boxlo}{\ensuremath{\underline{\nnInput}}}
\newcommand{\boxhi}{\ensuremath{\overline{\nnInput}}}
\newcommand{\size}[1]{\ensuremath{\norm{#1}_\mathrm{I}}}%

\newcommand{\lipBudget}{\ensuremath{\lambda}}
\newcommand{\widthWeight}{\ensuremath{\beta}}

\newcommand{\monVar}[2]{\ensuremath{\nu_{#1}\left(#2\right)}}

\crefname{corollary}{Cor.}{Cors.}
\Crefname{corollary}{Cor.}{Cors.}
\crefname{proposition}{Prop.}{Props.}
\Crefname{proposition}{Prop.}{Props.}
\crefname{remark}{Rem.}{Rems.}
\Crefname{remark}{Rem.}{Rems.}
\newcommand{\papertablestyle}[1][6pt]{%
  \small
  \setlength{\tabcolsep}{#1}%
  \renewcommand{\arraystretch}{1.08}%
}

\iclrfinalcopy %
\begin{document}

\maketitle
\lhead{Preprint}

\begin{abstract}
Most formal verification approaches for neural networks conservatively enclose the output set of the neural network. 
However, computing a tight output enclosure is in general computationally hard.
We sidestep this challenge with a novel neural network architecture that realizes mixed-monotone functions as the difference of monotone subnetworks.
By utilizing the theory on mixed-monotone systems, this reduces the computation of a tight output enclosure from complicated set propagation to two simple forward passes of opposite corners of an input interval. 
Moreover, because the size of the resulting enclosure is a closed-form function of the weights of the neural network, we can directly minimize it during training.
We prove a lower bound on the size of any such enclosure, given by the total variation of the represented function generalized to several inputs, which makes it possible to compare the tightness of the enclosure against the optimum.
Our evaluation on regression and dynamical-system benchmarks demonstrates that our training reduces the enclosure size to within {$0.5\,\%$} of the lower bound without sacrificing prediction accuracy.
Computing our enclosure is orders of magnitude faster than CROWN and $\alpha$-CROWN and on deep networks it is also tighter than zonotope propagation and CROWN, while its size exceeds that of $\alpha$-CROWN by only {$5.1\,\%$}.
\end{abstract}

\section{Introduction}

Formal verification of neural networks can decide whether the output of a neural network meets a given specification for all inputs bounded by a set.
Especially when learned components are deployed in safety-critical control loops, for example in autonomous driving~\citep{ye_2022}, formal verification is essential.
The exact image of a neural network is in general non-convex, and complete verifiers based on SMT or mixed-integer programming scale poorly in the number of neurons~\citep{katz_2017}.
Practical tools therefore enclose that image by propagating sets layer by layer, e.g., using intervals~\citep{gowal_2019}, zonotopes~\citep{singh_2018,koller_et_al_2025_shadow}, linear relaxations~\citep{zhang_2018} or polynomial zonotopes~\citep{kochdumper_althoff_2021,kochdumper_2023}, adding an abstraction error at every nonlinear layer.
That abstraction error accumulates with the depth of the neural network and with the size of the input set.
The abstraction error can be reduced with additional computations, e.g., higher-order abstractions~\citep{kochdumper_2023,ladner_althoff_2023} or branch-and-bound methods~\citep{bunel_et_al_2020}, making it possible to trade off small abstraction errors against high computational cost.

\paragraph{Main idea}
\begin{figure}
  \centering
  \includetikz{architecture}
  \caption{Our neural network architecture is the difference of two monotone subnetworks~$\monoNNInc$ and~$\monoNNDec$. Evaluating both subnetworks at the two opposite corners of the input interval yields an output enclosure without set propagation.}
  \label{fig:architecture}
\end{figure}
 
We overcome this tradeoff by constructing the neural network as a mixed-monotone function, which we realize as the difference of two monotone subnetworks~(\cref{fig:architecture}).
This allows us to use the theory on mixed-monotone systems to enclose the image of the neural network for an input interval without any set propagation. 
We only require two forward passes at opposite corners of the input interval, independent of the depth and width of the neural network and of the input dimension.

Moreover, we use the training of the neural network to explicitly optimize the tightness of the computed enclosure.
Our results show that training reduces the enclosure size to within $0.5\,\%$ of a lower bound that we prove, without sacrificing prediction accuracy.
The computation of our enclosure is (i) orders of magnitude faster than CROWN and $\alpha$-CROWN and (ii) on deep neural networks it is also tighter than zonotope propagation and CROWN, while its size exceeds that of $\alpha$-CROWN by only {$5.1\,\%$}.

\paragraph{Related work}
Monotonicity of neural networks can be achieved using sign-constrained weights~\citep{sill_1997,daniels_velikova_2010}, lattice networks~\citep{you_2017}, or certification of monotonicity after training~\citep{liu_2020,sivaraman_2020}; we build on the sign-constrained neural networks of~\citet{runje_2023}.
That literature is motivated by fairness, interpretability, and inductive bias; to the best of our knowledge, the enforced monotonicity has not been exploited for formal verification.

Mixed-monotone approaches combine a decomposition of the plant with a separately computed inclusion function for the network~\citep{jafarpour_2023}, or decompose a given neural network~\citep{meyer_2022}. In both cases, the decomposition is obtained after the neural network is fixed, and its tightness is inherited from the used verifier.
Classically, finding tight mixed-monotone decompositions is hard and the known constructions require structural assumptions, such as a sign-stable Jacobian~\citep{yang_2019,khajenejad_2023}.
In contrast, our neural network has a decomposition function by construction, and we directly optimize the tightness of the resulting enclosure during training, because the size of the enclosure is differentiable with respect to the weights.

Since the tightness of the enclosure is part of our training objective, a related field is certified training~\citep{gowal_2019,xu_2020,mueller_et_al_2023_sabr}, where abstraction errors are implicitly reduced by optimizing for robustness using sets that are propagated during training.
Further, set-based training~\citep{koller_et_al_2025_tmlr,wendl_et_al_2026_tmlr} is even closer to our work, as it explicitly penalizes the size of the enclosure in the loss, but it still requires a differentiable set propagation in every training step, as required for any other certified training method.

\paragraph{Contributions}
Our main contributions are:
\begin{itemize}[leftmargin=*,topsep=2pt,itemsep=1pt]
  \item A neural network architecture for mixed-monotone functions, allowing one to compute an output enclosure using two simple forward passes.
  \item A formula for the smallest enclosure achievable by a difference of two monotone functions: we prove that its size equals the total variation of the represented function generalized to several inputs.
  \item A neural network training objective that directly minimizes the size of the enclosure.
  \item An evaluation on static analytic models and dynamical systems, including the enclosure of reachable sets over multiple time steps and comparisons with zonotope propagation, interval bound propagation (IBP), CROWN, and $\alpha$-CROWN.
\end{itemize}

\section{Preliminaries}
\label{sec:prelim}

\paragraph{Notation}
We denote vectors with lowercase letters and matrices with uppercase letters.
The $i$-th entry of a vector $x$ is written as $x_{(i)}$ and the entry in the $i$-th row and the $j$-th column of a matrix $A\in\R^{n\times m}$ is written as $A_{(i,j)}$.
For vectors $x,y\in\R^{n}$ of equal dimension, we write $x \leq y$ for the component-wise partial order, i.e., $x_{(i)} \leq y_{(i)}$ for all $i\in\{1,2,\dots,n\}$.
We write the matrix (with appropriate size) that contains only zeros or ones as \zeros{} and \ones{}.
Given two matrices $A\in\R^{m\times n_1}$ and $B\in\R^{m\times n_2}$, their (horizontal) concatenation is denoted by $\cmat{A \ B}\in\R^{m\times (n_1 + n_2)}$.
The operation $\abs{A}$ takes the elementwise absolute value of a matrix $A\in\R^{n\times m}$; we use $\sign{\cdot}$ likewise.
Sets are denoted by uppercase calligraphic letters.
The set of all natural numbers up to $n\in\N$ is written as $[n]=\{1,2,\dots,n\}$.
For $a,b\in\R^m$ with $a\leq b$, we write $\shortI{a}{b} = \{x\in\R^m \mid a\leq x\leq b\}$ for the interval between $a$ and $b$.
For a nonempty bounded set $\contSet{S}\subseteq\R^m$, its interval hull $\shortI{\underline{s}}{\overline{s}}$ is the smallest interval containing $\contSet{S}$, where $\underline{s}_{(i)} = \inf_{s\in\contSet{S}} s_{(i)}$ and $\overline{s}_{(i)} = \sup_{s\in\contSet{S}} s_{(i)}$ for all $i\in[m]$.
Similar to~\citet[Sec.~3.1]{althoff_2023}, we measure the size of $\contSet{S}$ by the sum of the edge lengths of its interval hull, $\size{\contSet{S}} = \ones^\top(\overline{s}-\underline{s})$, which is referred to as the interval norm.

\subsection{Monotone functions and mixed-monotone decompositions}
\label{sec:mixmon}

We utilize the theory of mixed monotonicity to construct neural networks that admit easy-to-compute and tight output enclosures.
A monotone function returns its extremal values at two opposite corners of an input interval.
\begin{definition}[Monotone function, {\citep[Def.~1]{coogan_arcak_2015}}]
  \label{def:monotone}
  Given an interval $\nnInputSet_0\subseteq\R^{n}$, a function $f\colon \nnInputSet_0 \to \R^{m}$ is
  \begin{enumerate}[label=(\roman*)]
    \item non-decreasing on $\nnInputSet_0$ if $\nnInput \leq \nnInput' \implies f(\nnInput) \leq f(\nnInput')$, for all $\nnInput,\nnInput' \in \nnInputSet_0$, and
    \item non-increasing on $\nnInputSet_0$ if $\nnInput \leq \nnInput' \implies f(\nnInput) \geq f(\nnInput')$, for all $\nnInput,\nnInput' \in \nnInputSet_0$.
  \end{enumerate}
\end{definition}
Throughout this work, a monotone (or monotonic) function or neural network refers to a non-decreasing one.

Mixed monotonicity generalizes monotonicity by allowing a function to be non-decreasing in some variables and non-increasing in others.
\begin{proposition}[Enclosing a mixed-monotone function, {\citep[Def.~2]{coogan_arcak_2015}}] \label{def:decfun}\label{prop:soundness}
  For $\nnInputSet_0\subseteq \R^{n}$, let $f\colon \nnInputSet_0 \to \R^{m}$.
  A function $\decompFun{f}\colon \nnInputSet_0 \times \nnInputSet_0 \to \R^{m}$ is a decomposition function for $f$ iff for all $\nnInput_1,\nnInput_2 \in \nnInputSet_0$
  \begin{align*}
    \text{(i)}\;   & \decompFun{f}(\nnInput_1,\nnInput_1) = f(\nnInput_1) \text{,} &
    \text{(ii)}\;  & \decompFun{f}(\cdot,\nnInput_2) \text{ is non-decreasing,}    &
    \text{(iii)}\; & \decompFun{f}(\nnInput_1,\cdot) \text{ is non-increasing.}
  \end{align*}
 The function $f$ is called mixed-monotone if a decomposition function for $f$ exists.
  The image of $f$ over any interval $\nnInputSet = \shortI{\boxlo}{\boxhi} \subseteq \nnInputSet_0$ can be enclosed by
  \begin{equation*}
    f(\nnInputSet) \subseteq \shortI{\decompFun{f}(\boxlo,\boxhi)} {\decompFun{f}(\boxhi,\boxlo)}.
  \end{equation*}
\end{proposition}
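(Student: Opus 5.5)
The only claim that needs proof is the enclosure $f(\nnInputSet) \subseteq \shortI{\decompFun{f}(\boxlo,\boxhi)}{\decompFun{f}(\boxhi,\boxlo)}$. The rest of the statement is definitional: the three conditions define a decomposition function, and mixed monotonicity is defined as the existence of one. The plan is to take an arbitrary point $\nnInput \in \nnInputSet = \shortI{\boxlo}{\boxhi}$ and sandwich $f(\nnInput)$ componentwise between the two corner evaluations. We use only conditions (i)--(iii) and the componentwise partial order from the notation paragraph, together with \cref{def:monotone}.

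For the upper bound, fix $\nnInput \in \nnInputSet$, so $\boxlo \leq \nnInput \leq \boxhi$. By (i), $f(\nnInput) = \decompFun{f}(\nnInput,\nnInput)$. We change one argument at a time.
\begin{itemize}
\item Since $\nnInput \leq \boxhi$ and $\decompFun{f}(\cdot,\nnInput)$ is non-decreasing by (ii), we get $\decompFun{f}(\nnInput,\nnInput) \leq \decompFun{f}(\boxhi,\nnInput)$.
\item Since $\boxlo \leq \nnInput$ and $\decompFun{f}(\boxhi,\cdot)$ is non-increasing by (iii), we get $\decompFun{f}(\boxhi,\nnInput) \leq \decompFun{f}(\boxhi,\boxlo)$.
\end{itemize}
Chaining the two inequalities gives $f(\nnInput) \leq \decompFun{f}(\boxhi,\boxlo)$.

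The lower bound is symmetric:
\begin{equation*}
\decompFun{f}(\boxlo,\boxhi) \leq \decompFun{f}(\boxlo,\nnInput) \leq \decompFun{f}(\nnInput,\nnInput) = f(\nnInput).
\end{equation*}
The first inequality uses (iii) with $\nnInput \leq \boxhi$. The second uses (ii) with $\boxlo \leq \nnInput$. Since the componentwise order is transitive, $f(\nnInput)$ lies in the stated interval for every $\nnInput \in \nnInputSet$, which proves the inclusion.

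Two technical points need checking, though neither is a real obstacle.
\begin{itemize}
\item Every intermediate argument pair must lie in $\nnInputSet_0 \times \nnInputSet_0$. This holds because $\boxlo, \boxhi, \nnInput \in \nnInputSet \subseteq \nnInputSet_0$.
\item The enclosing interval must be well defined, i.e., $\decompFun{f}(\boxlo,\boxhi) \leq \decompFun{f}(\boxhi,\boxlo)$. This follows from the chain above applied to any $\nnInput \in \nnInputSet$, e.g., $\nnInput = \boxlo$, since $\nnInputSet$ is nonempty.
\end{itemize}
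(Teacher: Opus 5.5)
Your proof is correct: the one-argument-at-a-time sandwich $d_f(\underline{x},\overline{x}) \leq d_f(\underline{x},x) \leq d_f(x,x) = f(x) \leq d_f(\overline{x},x) \leq d_f(\overline{x},\underline{x})$ is the standard argument behind this enclosure. The paper gives no proof of its own and only cites Coogan and Arcak, so there is no different route to compare against, and your two side checks (that the intermediate argument pairs stay in the domain, and that the enclosing interval is well defined) are appropriate.
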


Decomposition functions are not unique~\citep{coogan_2020}; hence, finding a tight decomposition is a central challenge.
In this work, we build neural networks to be mixed-monotone functions and we directly optimize the tightness of output enclosures through neural network training.

A standard decomposition function is the difference of two monotone functions~\citep[Ex.~1]{coogan_arcak_2015}, i.e., given the  monotone functions $\monoInc$ and $\monoDec$, $f(\nnInput) = \monoInc(\nnInput) - \monoDec(\nnInput)$, is a mixed-monotone function with decomposition function
\begin{equation}\label{eq:decomp_diff}
  \decompFun{f}(\nnInput_1,\nnInput_2) = \monoInc(\nnInput_1) - \monoDec(\nnInput_2)\text{.}
\end{equation}
We build our neural network architecture based on the decomposition in~\eqref{eq:decomp_diff}. Future work can explore more general forms of decomposition functions.

\subsection{Monotonic Neural networks}
Feed-forward networks~$\NN\colon \R^{n_0} \to \R^{n_\numLayers}$ are sequences of $\numLayers\in\N$ layers.
Each layer is either affine (linear layer) or an element-wise nonlinear activation function (nonlinear layer).
In this work, we consider sign-constrained neural networks that can only represent monotonic functions.
\begin{definition}[Sign-Constrained Neural Network, {\citep[Def.~1]{runje_2023}}]\label{def:neural_network}
  For an input $\nnInput\in\R^{\numNeurons_0}$, the output of a sign-constrained neural network $\nnOutput = \NN(\nnInput)\in\R^{\numNeurons_\numLayers}$ is
  \begin{align*}
    \nnHidden_0 & = \nnInput\text{,}                                                &
    \nnHidden_k & = \nnLayer{k}{\nnHidden_{k-1}}\quad\text{for $k\in[\numLayers]$,} &
    \nnOutput   & = \nnHidden_\numLayers\text{,}
  \end{align*}
  where
  \begin{equation*}
    \nnLayer{k}{\nnHidden_{k-1}} = \begin{cases}
      \abs{\Weights_k}\,\nnHidden_{k-1} + \bias_k & \text{if the $k$-th layer is linear,} \\
      \actfun_k(\nnHidden_{k-1})                  & \text{otherwise,}
    \end{cases}
  \end{equation*}
  with weights $\Weights_k\in\R^{\numNeurons_k\times \numNeurons_{k-1}}$, bias $\bias_k\in\R^{\numNeurons_k}$, and the (element-wise) nonlinear function $\actfun_k$.
\end{definition}
Any sign-constrained neural network represents a monotonic function~\citep[Lem.~1]{runje_2023}.
For the gradient computation during training, the weights remain unconstrained~(\Cref{app:nonnegative-weights}).
The activation functions can be used to determine whether the function represented by a monotonic neural network is convex or concave~\citep[Cor.~3]{runje_2023}.

\subsection{Problem Statement}
Given a neural network $\NN\colon \R^{n_0} \to \R^{n_\numLayers}$, an input set $\nnInputSet \subset \R^{n_0}$, and an unsafe set $\mathcal{U} \subset \R^{n_\numLayers}$, the goal of neural network verification is to prove that all inputs are safe, i.e., $\forall \nnInput \in \nnInputSet\colon \NN(\nnInput) \notin \mathcal{U}$.
To this end, we compute an enclosure $\mathcal{Y} \supseteq \{\NN(\nnInput) \mid \nnInput \in \nnInputSet\}$ of the output set and check whether $\mathcal{Y} \cap \mathcal{U} = \emptyset$.
We want to build a neural network architecture that allows an efficient computation of tight output enclosures to make neural network verification practical and scalable.

\section{A monotone-difference architecture}\label{sec:arch}
We propose a neural network architecture that realizes mixed-monotone functions using the difference of two monotone neural subnetworks:
Let $\monoNNInc$, $\monoNNDec\colon \R^{\numNeurons_0} \to \R^{\numNeurons_\numLayers}$ be monotonic neural networks, we construct a neural network~$\NN$ as
\begin{equation}\label{eq:arch}
  \NN = \monoNNInc - \monoNNDec\text{.}
\end{equation}
We use sign-constraints to realize the monotone neural subnetworks~$\monoNNInc$, $\monoNNDec$~(\cref{def:neural_network}).
The neural network~$\NN$ is mixed-monotone by construction and we can use the decomposition function to enclose the image of the neural network.
\begin{proposition}\label{prop:enclosure}
  Given an input set~$\nnInputSet=\shortI{\boxlo}{\boxhi}\subseteq\R^{\numNeurons_0}$, a neural network~$\NN = \monoNNInc - \monoNNDec$, which is the difference of two monotonic neural networks~$\monoNNInc,\monoNNDec$, we can enclose the output set of the neural network by
\begin{equation*}\label{eq:enclosure}
  \opEnclose{\NN}{\nnInputSet}
    = \shortI{\decompFun{\NN}(\boxlo,\boxhi)}
              {\decompFun{\NN}(\boxhi,\boxlo)}
    \supseteq \NN(\nnInputSet)\text{,}
\end{equation*}
  where
  \begin{equation*}\label{eq:decomp}
    \decompFun{\NN}(\nnInput_1,\nnInput_2) = \monoNNInc(\nnInput_1) - \monoNNDec(\nnInput_2)\text{.}
  \end{equation*}
  \begin{proof}
    The neural network~$\NN$ is mixed monotone~\citep[Ex.~1]{coogan_arcak_2015}, hence we can enclose the output using the decomposition function~(\cref{prop:soundness}).
  \end{proof}
\end{proposition}
The computation of the enclosure only requires two forward passes through the neural network via the decomposition function, instead of requiring a complicated set propagation.

\paragraph{Computational Complexity} \Cref{prop:enclosure} computes an enclosure with two forward passes, so that computing it is as expensive as a single point evaluation up to a constant factor, i.e., it takes $\bigO(\numNeurons_{\max}^2\,\numLayers)$ time, where $\numNeurons_{\max}=\max_{k\in\{0,\dots,\numLayers\}}\numNeurons_k$ denotes the largest number of neurons of a layer, including the input layer.
In contrast, propagating a zonotope through the same neural network takes $\bigO(\numNeurons_{\max}^2\,\numGens\,\numLayers)$ time, where the number of generators grows with the size of the neural network, i.e., $\numGens\in \bigO(\numNeurons_0 + \sum_{k\in[\numLayers]}\numNeurons_k)$~\citep[Prop.~16]{koller_et_al_2025_tmlr}.
Linear relaxations, e.g., CROWN~\citep{zhang_2018} or DeepPoly~\citep{singh_et_al_2019deeppoly}, take $\bigO(\numNeurons_{\max}^3\,\numLayers^2)$ time~\citep[Sec.~3.1]{zhang_2018}, as the bounds of each neuron are computed by a backward pass through all previous layers in $\bigO(\numNeurons_{\max}^2\,\numLayers)$ time~\citep[Sec.~2]{singh_et_al_2019deeppoly}.
Compared to both, our enclosure saves a factor that grows linearly with the number of neurons of the neural network.

\section{Verification-aware training}\label{sec:training}\label{sec:loss}
\begin{figure}[t]
  \centering
  \includetikz{decomposition}
  \caption{The size of an enclosure depends on the decomposition: For the same function~$f$, two decompositions yield enclosures of different sizes.}
  \label{fig:decomposition}
\end{figure}
The decomposition of a mixed-monotone function is not unique, and different decompositions of the same function yield enclosures of different size.
\cref{fig:decomposition} compares two decompositions, which yield enclosures of different sizes.
First, we use an example to demonstrate the non-uniqueness of a decomposition and its effects on the size of the enclosure, before we explain how we use neural network training to optimize the tightness of an enclosure.
\begin{example}[Comparing Decompositions]\label{ex:decomposition}
  Let $\nnInputSet = \shortI{0}{2}$ be the input interval and let $f(\nnInput) = 1 - \abs{\nnInput - 1}$ be a function~(see \cref{fig:decomposition}(a)).
  The exact image of~$f$ (which generally is not computable) has size 1, i.e., $\size{f(\nnInputSet)} = \size{\shortI{0}{1}} = 1$.

  The function~$f$ can be represented by the mixed-monotone neural network $\NN = \monoNNInc - \monoNNDec$, with monotone subnetworks~(see~\cref{fig:decomposition}(b))
  \begin{align}\label{eq:decomp1}
    \monoNNInc(\nnInput) & = \nnInput\text{,}                &
    \monoNNDec(\nnInput) & = 2\,\max(\nnInput - 1,0)\text{.}
  \end{align}
  Using~\cref{prop:soundness}, we can enclose the output set of~$\NN$ by
  \begin{equation*}
    \opEnclose{\NN}{\nnInputSet} = \shortI{\monoNNInc(0) - \monoNNDec(2)}{\monoNNInc(2) - \monoNNDec(0)} = \shortI{-2}{2}\text{.}
  \end{equation*}
  The size of the enclosure is $\size{\opEnclose{\NN}{\nnInputSet}} = 4$.

  In contrast, another mixed-monotone neural network $\NN' = \monoNNInc['] - \monoNNDec[']$ can also represent the function $f$ with monotone subnetworks~(see~\cref{fig:decomposition}(c))
  \begin{align}\label{eq:decomp2}
    \monoNNInc['](\nnInput) & = \min(\nnInput,1)\text{,}                &
    \monoNNDec['](\nnInput) & = \max(\nnInput - 1,0)\text{.}
  \end{align}
  The computed enclosure is smaller:
  \begin{equation*}
    \opEnclose{\NN'}{\nnInputSet} = \shortI{\monoNNInc['](0) - \monoNNDec['](2)}{\monoNNInc['](2) - \monoNNDec['](0)} = \shortI{-1}{1}\text{,}
  \end{equation*}
  which has size $\size{\opEnclose{\NN'}{\nnInputSet}} = 2$.
\end{example}
\paragraph{Training Loss} Our goal is to directly optimize the tightness of the enclosure during training.
To this end, we combine a standard regression loss (i.e., mean-squared error) with a penalty of the enclosure size for a fixed input perturbation radius~$\epsilon\in\R^{\numNeurons_0}$.
For an input~$\nnInput_j\in\R^{\numNeurons_0}$ with target~$\nnTarget_j\in\R^{\numNeurons_\numLayers}$, we compute the training loss
\begin{equation}\label{eq:loss}
  \pointLoss
  = \underbrace{\norm{\NN(\nnInput_j) - \nnTarget_j}_2^2}_{\text{Regression Loss}} + \widthWeight \underbrace{\size{\opEnclose{\NN}{\shortI{\nnInput_j-\epsilon}{\nnInput_j+\epsilon}}}}_{\text{Enclosure Size}}\text{.}
\end{equation}
In principle, the training loss is similar to set-based training~\citep{koller_et_al_2025_tmlr,wendl_et_al_2026_tmlr}: The training objective penalizes the size of an output enclosure. However, we do not require set propagations during training, which makes the training significantly more efficient.

Evaluating the training loss~\eqref{eq:loss} only requires three forward passes, i.e., one for the regression loss and two for the enclosure~(\cref{prop:enclosure}); hence, a training iteration takes $\bigO(\numNeurons_{\max}^2\,\numLayers)$ time, which is as expensive as standard point-based training up to a constant factor.
In contrast, a set-based training iteration takes $\bigO(\numNeurons_{\max}^2\,\numGens\,\numLayers)$ time~\citep[Sec.~4.4]{koller_et_al_2025_tmlr}. Hence, our training saves the same factor~$\numGens$ as our enclosure computation.

Subsequently, we derive a lower bound of the enclosure size, which motivates the training objective and serves as a reference to measure the tightness of computed enclosures.

\paragraph{Enclosure Size}
The size of an enclosure cannot be smaller than the total variation of the output of the neural network along chains in the input interval, which generalizes the total variation of a scalar function~\citep[Sec.~4.4]{cohn1993measure} to several inputs~\citep[Sec.~2, Def.~A]{Clarkson1933OnDO}.
A chain in an interval $\nnInputSet = \shortI{\boxlo}{\boxhi} \subseteq \R^{\numNeurons_0}$ is a tuple $(\nnInput_0,\dots,\nnInput_p)$ of finitely many inputs with $\boxlo \leq \nnInput_0 \leq \nnInput_1 \leq \dots \leq \nnInput_p \leq \boxhi$ with respect to the component-wise partial order~(\cref{sec:prelim}), where the length~$p\in\N$ is arbitrary.
The total variation of the output of a neural network~$\NN$ along chains in~$\nnInputSet$ is the vector~$\monVar{\NN}{\nnInputSet}\in\R^{\numNeurons_\numLayers}$, whose $i$-th entry, for $i\in[\numNeurons_\numLayers]$, is the largest summed change of the $i$-th output along a chain:
\begin{equation}\label{eq:variation}
  \monVar{\NN}{\nnInputSet}_{(i)} = \sup\,\biggl\{\,\sum_{\ell=1}^{p} \abs{\NN(\nnInput_\ell)_{(i)} - \NN(\nnInput_{\ell-1})_{(i)}} \;\biggm|\; p\in\N\text{, } \boxlo \leq \nnInput_0 \leq \nnInput_1 \leq \dots \leq \nnInput_p \leq \boxhi \biggr\}\text{.}
\end{equation}
Between two consecutive inputs of a chain, the output of~$\NN$ changes by at most the sum of the increases of~$\monoNNInc$ and~$\monoNNDec$, as neither subnetwork can decrease.
\newcommand{\lowerBoundProp}{
  Given an input interval $\nnInputSet = \shortI{\boxlo}{\boxhi} \subseteq \R^{\numNeurons_0}$ and a neural network $\NN = \monoNNInc - \monoNNDec$, where $\monoNNInc,\monoNNDec$ are non-decreasing on~$\nnInputSet$~(\cref{def:monotone}), it holds that
  \begin{equation*}
    \ones^\top\,\monVar{\NN}{\nnInputSet}\leq\size{\opEnclose{\NN}{\nnInputSet}}\text{.}
  \end{equation*}
  Moreover, there exist non-decreasing functions $\monoNNInc['],\monoNNDec[']\colon\nnInputSet\to\R^{\numNeurons_\numLayers}$ with $\monoNNInc['] - \monoNNDec['] = \NN$ on~$\nnInputSet$, such that equality holds for $\NN' = \monoNNInc['] - \monoNNDec[']$.
}
\begin{proposition}[Smallest enclosure]\label{prop:lowerbound}
  \lowerBoundProp
  \begin{proof}
    See \Cref{app:proofs}.
  \end{proof}
\end{proposition}
The lower bound~$\ones^\top\monVar{\NN}{\nnInputSet}$ only depends on the function represented by the neural network and not on its decomposition.

\section{Evaluation}
\label{sec:eval}

We have implemented our mixed-monotone neural network architecture in MATLAB using the CORA toolbox~\citep{althoff_2015}.
Our evaluation is structured as follows: We first assess prediction accuracy of mixed-monotone neural networks, next compare enclosure tightness and computational cost, and finally showcase the effectiveness of verification-aware training.
Finally, we demonstrate the superior scalability of our approach with respect to the input dimension and number of neural network layers and additionally evaluate the enclosure tightness over multiple time steps of a dynamical system.

\paragraph{Benchmarks} We evaluate our approach on two static analytic models, i.e., a scalar analytic model and a multi-input analytic model with a variable number of inputs, and on two dynamical systems (Pendulum and Tanks).
The scalar analytic model has a closed-form total variation, which serves as an exact reference for the lower bound of~\cref{prop:lowerbound}.
The two dynamical systems cover a cooperative system (Tanks), i.e., a system whose solutions preserve the partial order of the initial states for a constant input~\citep{smith_1995}, and a non-cooperative system (Pendulum).
In the multi-input analytic model, only two inputs enter nonlinearly, so that we can increase the input dimension without changing the nonlinearity of the model.

\paragraph{Metrics} We report prediction accuracy using $R^2$-scores.
To measure the tightness of a computed enclosure, we report the ratio of its size to the size of the exact image, i.e., for an input set~$\nnInputSet$ and enclosure~$\nnOutputSet \supseteq \NN(\nnInputSet)$, we report~$\size{\nnOutputSet}/\size{\NN(\nnInputSet)}$, which we refer to as the tightness of the enclosure.
Since computing the exact image is generally intractable, we underapproximate its size using samples.
Since $\size{\NN(\nnInputSet)} \leq \ones^\top\,\monVar{\NN}{\nnInputSet}$ by \cref{prop:soundness,prop:lowerbound}, the tightness is also an upper bound of the ratio~$\size{\nnOutputSet}/(\ones^\top\monVar{\NN}{\nnInputSet})$ of the enclosure size to the lower bound.

\subsection{Prediction accuracy}\label{sec:accuracy}
Our sign-constrained architecture maintains high prediction accuracy across the tested benchmarks and the number of neural network layers, showing that the monotonicity constraints do not prevent effective training.
\Cref{tab:accuracy-main} compares our architecture with an unconstrained baseline under the same training protocol, with differences in mean $R^2$ of at most $0.011$ across the reported configurations.
Additional results for the scalar analytic model are provided in~\Cref{tab:accuracy}.
\begin{table}
  \centering
  \caption{Comparing prediction accuracy across different benchmarks shown as mean $\pm$ standard deviation over ten seeds.}
  \label{tab:accuracy-main}
  \begin{tabular}{l c R@{$\,\pm\,$}L R@{$\,\pm\,$}L}
    \toprule
    \multirow{2}{*}{\textbf{Benchmark}} & \multirow{2}{*}{\textbf{\#Layers}~$\numLayers$} & \multicolumn{4}{c}{\textbf{Accuracy}~$\uparrow$}                                                       \\
    \cmidrule(lr){3-6}
                                        &                                                 & \multicolumn{2}{c}{Baseline}                     & \multicolumn{2}{c}{Ours}                            \\
    \midrule
    \multirow{3}{*}{Scalar analytic}
                                        & 2                                               & 0.9837                                           & 0.0018                   & \mathbf{0.9858} & 0.0009 \\
                                        & 4                                               & \mathbf{0.9998}                                  & 0.0001                   & 0.9895          & 0.0051 \\
                                        & 8                                               & \mathbf{0.9997}                                  & 0.0003                   & 0.9964          & 0.0035 \\
    \midrule
    \multirow{2}{*}{Pendulum}
                                        & 3                                               & \mathbf{0.9942}                                  & 0.0024                   & 0.9878          & 0.0071 \\
                                        & 8                                               & \mathbf{0.9931}                                  & 0.0049                   & 0.9868          & 0.0098 \\
    \midrule
    \multirow{2}{*}{Tanks}
                                        & 3                                               & \mathbf{0.9997}                                  & 0.0001                   & 0.9993          & 0.0004 \\
                                        & 8                                               & \mathbf{0.9985}                                  & 0.0013                   & 0.9984          & 0.0012 \\
    \bottomrule
  \end{tabular}
\end{table}

\subsection{Verification tightness and runtimes}
\label{sec:verifiers}
\label{sec:crossover}
In~\cref{tab:verifiers-main}, we compare the tightness and runtimes of our enclosure (Ours) with interval bound propagation~\citep[IBP;][]{gowal_2019}, zonotope propagation in CORA~\citep{althoff_2015}, CROWN~\citep{zhang_2018}, and $\alpha$-CROWN~\citep{xu_2021} for neural networks trained on the scalar analytic model; see \Cref{app:timing,app:lirpa} for details and additional results.
Since the methods are implemented in different toolboxes, \texttt{auto\_LiRPA} (PyTorch) and CORA (MATLAB), runtimes are reported per toolbox and relative to our enclosure within that toolbox to provide a fair comparison.
At four layers, the enclosures of all methods exceed the size of the sampled image by at most $17\,\%$, and our enclosure is as tight as zonotope propagation. At eight layers, our enclosure is tighter than zonotope propagation and CROWN, and its size exceeds that of $\alpha$-CROWN by only $5.1\,\%$, i.e., the tightness of our enclosure improves with depth while that of the set-propagation methods degrades.
Computing our enclosure is $32\times$ faster than zonotope propagation, $256\times$ faster than CROWN, and $20\,600\times$ faster than $\alpha$-CROWN, which additionally optimizes its bounds.
\begin{table}
  \centering
  \caption{Comparing enclosure tightness and runtimes on the scalar analytic model shown as mean $\pm$ standard deviation over five seeds. The runtime is the time to enclose the output set for one input interval, reported as the minimum over repeated measurements~(\cref{app:timing,app:lirpa}).
    We evaluate our approach within CORA as well as within \texttt{auto\_LiRPA}.
  }
  \label{tab:verifiers-main}
  \begin{tabular}{c l l R@{$\,\pm\,$}L R R}
    \toprule
    \multirow{2}{*}{\textbf{\#Layers}~$\numLayers$} & \multirow{2}{*}{\textbf{Framework}}   & \multirow{2}{*}{\textbf{Method}} & \multicolumn{2}{c}{\multirow{2}{*}{\textbf{Tightness}~$\downarrow$}} & \multicolumn{2}{c}{\textbf{Runtime}~$\downarrow$}                                               \\
    \cmidrule(lr){6-7}
                                                    &                                       &                                  & \multicolumn{2}{c}{}                                                 & \textit{[\textnormal{ms}]}                        & \textit{[relative to ours]}                 \\
    \midrule
    \multirow{6}{*}{4}
                                                    & \multirow{4}{*}{\texttt{auto\_LiRPA}} & Ours (exported)                  & 1.1662                                                               & 0.1167                                            & \mathbf{0.12}               & 1\times       \\
                                                    &                                       & IBP                              & 1.1662                                                               & 0.1167                                            & 1.21                        & 9.9\times     \\
                                                    &                                       & CROWN                            & 1.1264                                                               & 0.0371                                            & 13.7                        & 112\times     \\
                                                    &                                       & $\alpha$-CROWN                   & \mathbf{1.0024}                                                      & 0.0025                                            & 1\,070                      & 8\,750\times  \\
    \cmidrule(lr){2-7}
                                                    & \multirow{2}{*}{CORA}                 & Ours                             & 1.1662                                                               & 0.1167                                            & \mathbf{0.01}               & 1\times       \\
                                                    &                                       & Zonotope                         & 1.1592                                                               & 0.0483                                            & 0.17                        & 11.7\times    \\
    \midrule
    \multirow{6}{*}{8}
                                                    & \multirow{4}{*}{\texttt{auto\_LiRPA}} & Ours (exported)                  & 1.0534                                                               & 0.0805                                            & \mathbf{0.21}               & 1\times       \\
                                                    &                                       & IBP                              & 1.0534                                                               & 0.0805                                            & 2.35                        & 11.2\times    \\
                                                    &                                       & CROWN                            & 1.2640                                                               & 0.0608                                            & 53.8                        & 256\times     \\
                                                    &                                       & $\alpha$-CROWN                   & \mathbf{1.0022}                                                      & 0.0012                                            & 4\,330                      & 20\,600\times \\
    \cmidrule(lr){2-7}
                                                    & \multirow{2}{*}{CORA}                 & Ours                             & 1.0534                                                               & 0.0805                                            & \mathbf{0.02}               & 1\times       \\
                                                    &                                       & Zonotope                         & 1.4409                                                               & 0.0655                                            & 0.69                        & 32\times      \\
    \bottomrule
  \end{tabular}
\end{table}
Due to the monotone subnetworks, the IBP bounds coincide with ours, but our bounds are computed using two simple forward passes without any set propagation, which makes it $10\times$ to $11\times$ faster than IBP in \texttt{auto\_LiRPA}.

\subsection{Verification-Aware Training and the lower bound}\label{sec:betasweep}
To evaluate the effectiveness of our verification-aware training, we vary the parameter~$\widthWeight$ to weight the width of the output set in the training loss~(see \cref{eq:loss}) and compare the accuracy and enclosure tightness.
\cref{tab:beta} shows the results.
Using~$\widthWeight$, we can tradeoff accuracy against enclosure tightness: A small value increases accuracy, while a larger value favors enclosure tightness.
\begin{table}
  \centering
  \caption{Comparing the accuracy and enclosure tightness for varying~$\widthWeight$ in the loss function~\cref{eq:loss} shown as mean $\pm$ standard deviation over three seeds.}
  \label{tab:beta}
  \begin{tabular}{l R R@{$\,\pm\,$}L R@{$\,\pm\,$}L}
    \toprule
    \textbf{Benchmark} & $\widthWeight$ & \multicolumn{2}{c}{\textbf{Accuracy}~$\uparrow$} & \multicolumn{2}{c}{\textbf{Tightness}~$\downarrow$}                          \\
    \midrule
    \multirow{5}{*}{Scalar analytic}
                       & 0              & 0.9846                                           & 0.0007                                              & 1.821          & 0.136 \\
                       & 0.01           & 0.9857                                           & 0.0034                                              & 1.024          & 0.009 \\
                       & 0.05           & \mathbf{0.9883}                                  & 0.0012                                              & 1.004          & 0.003 \\
                       & 0.2            & 0.9803                                           & 0.0012                                              & \mathbf{1.000} & 0.000 \\
                       & 1.0            & 0.9392                                           & 0.0121                                              & \mathbf{1.000} & 0.000 \\
    \midrule
    \multirow{5}{*}{Tanks}
                       & 0              & \mathbf{0.9999}                                  & 0.0000                                              & 1.883          & 0.134 \\
                       & 0.01           & 0.9998                                           & 0.0001                                              & \mathbf{1.000} & 0.000 \\
                       & 0.05           & 0.9985                                           & 0.0006                                              & \mathbf{1.000} & 0.000 \\
                       & 0.2            & 0.9852                                           & 0.0035                                              & \mathbf{1.000} & 0.000 \\
                       & 1.0            & 0.8646                                           & 0.0121                                              & \mathbf{1.000} & 0.000 \\
    \bottomrule
  \end{tabular}
\end{table}

For $\widthWeight \geq 0.05$, the enclosure coincides with the sampled image on both benchmarks, i.e., the decomposition is already tight; a larger $\widthWeight$ can then only reduce the enclosure size by reducing the variation of the function represented by the neural network, which decreases its accuracy.

\subsection{Scalability}\label{sec:dims}

Due to mixed-monotonicity of the neural network, our enclosure computation does not require to explicitly consider approximation errors.
Set propagation approaches suffer from the wrapping effect of approximation errors across layers, which prevents the scalability of these approaches.
We demonstrate the superior scalability of our approach across different dimensions:
\begin{enumerate*}[label=(\roman*)]
  \item the number of input dimensions,
  \item the number of neural network layers, and
  \item the number of time steps over which the neural network is applied iteratively.
\end{enumerate*}

\paragraph{Number of Input Dimensions}
We significantly increase the number of input dimensions of the multi-input analytic model and compare the enclosure tightness of our approach against the baseline using zonotope propagation.
\cref{tab:dims-main} shows the results: While the tightness of the baseline explodes for $n=100$, our approach remains significantly tighter.
Further results are provided in~\Cref{tab:dims}.
\begin{table}
  \centering
  \caption{Comparing the enclosure tightness for different numbers of input dimensions shown as mean $\pm$ standard deviation over five seeds.}
  \label{tab:dims-main}
  \begin{tabular}{c R@{$\,\pm\,$}L R@{$\,\pm\,$}L}
    \toprule
    \multirow{2}{*}{\textbf{\#Input Dims}} & \multicolumn{4}{c}{\textbf{Tightness}~$\downarrow$}                                                     \\
    \cmidrule(lr){2-5}
                                           & \multicolumn{2}{c}{Zonotope}                        & \multicolumn{2}{c}{Ours}                          \\
    \midrule
    2                                      & 1.65                                                & 0.01                     & \mathbf{1.045} & 0.008 \\
    20                                     & 1.77                                                & 0.09                     & \mathbf{1.073} & 0.009 \\
    100                                    & 360                                                 & 31                       & \mathbf{1.167} & 0.039 \\
    \bottomrule
  \end{tabular}
\end{table}

\paragraph{Number of Layers}

We compare in~\cref{tab:crossover} the tightness of our approach against the baseline using zonotope propagation for different numbers of neural network layers at a fixed perturbation radius~$\epsilon = 0.05$.
Additionally, we report the smallest input set radius~(crossover radius) at which our enclosure is smaller than the corresponding zonotope enclosure.
When increasing the number of layers, the crossover radius becomes smaller. Additional results are in \Cref{app:crossover-results}.
\begin{table}
  \centering
  \caption{Comparing the enclosure tightness for different numbers of neural network layers shown as mean $\pm$ standard deviation over ten seeds. The crossover radius is reported as median (range) over seeds; ``--'' indicates no crossover.}
  \label{tab:crossover}
  \begin{tabular}{c R@{$\,\pm\,$}L R@{$\,\pm\,$}L c}
    \toprule
    \multirow{2}{*}{\textbf{\#Layers}~$\numLayers$} & \multicolumn{4}{c}{\textbf{Tightness}~$\downarrow$} & \multirow{2}{*}{\makecell{\textbf{Crossover Radius}~$\downarrow$                                                   \\ \textit{[median (range)]}}}\\
    \cmidrule(lr){2-5}
                                                    & \multicolumn{2}{c}{Zonotope}                        & \multicolumn{2}{c}{Ours}                                         &                                                 \\
    \midrule
    2                                               & \mathbf{1.08}                                       & 0.01                                                             & 5.92          & 0.79 & --                       \\
    3                                               & \mathbf{1.11}                                       & 0.02                                                             & 2.47          & 0.72 & $1.00$ ($0.1$--$1.5$)    \\
    4                                               & \mathbf{1.14}                                       & 0.03                                                             & 1.56          & 0.37 & $0.20$ ($0.03$--$0.5$)   \\
    6                                               & 1.24                                                & 0.12                                                             & \mathbf{1.13} & 0.12 & $0.025$ ($0.01$--$0.15$) \\
    8                                               & 1.38                                                & 0.25                                                             & \mathbf{1.19} & 0.18 & $0.04$ ($0.01$--$0.075$) \\
    \bottomrule
  \end{tabular}
\end{table}

\paragraph{Multiple time steps}\label{sec:closedloop}
Finally, we apply the neural network iteratively: we train a mixed-monotone neural network to predict the state of the Tanks system after one time step for a constant input and enclose the reachable set over several time steps by applying~\cref{prop:enclosure} repeatedly, where the enclosure of one time step is the input interval of the next~(\cref{app:closedloop-details}).
We compare against zonotope propagation through the unconstrained baseline trained on the same data.
\cref{fig:closedloop} shows the size of the enclosures over time: our enclosure stays close to the sampled reachable set, while the zonotope enclosure of the baseline diverges for $\numLayers=8$ layers.
\begin{figure}[tb]
  \centering
  \includetikz{closedloop}
  \caption{Size of the enclosure of the reachable set over time for the Tanks system, when the neural network is applied iteratively.}
  \label{fig:closedloop}
\end{figure}

\section{Conclusion}
We introduce a neural network architecture that realizes mixed-monotone functions as the difference of two monotone subnetworks.
This architecture replaces set propagation by two forward passes of opposite corners of the input interval, which makes computing an output enclosure as cheap as a point evaluation.
Because the size of this enclosure is a closed-form function of the weights, we can directly minimize it during training, which turns the classical problem of finding a tight mixed-monotone decomposition into a learning problem; the total variation of the represented function along chains provides a lower bound with which a learned decomposition can be compared.
Our evaluation shows that (i) training reduces the enclosure size to within $0.5\,\%$ of this lower bound without sacrificing prediction accuracy, (ii) computing our enclosure is orders of magnitude faster than CROWN and $\alpha$-CROWN, and (iii) on deep neural networks the enclosure is tighter than zonotope propagation and CROWN, while its size exceeds that of $\alpha$-CROWN by only $5.1\,\%$.
The advantage grows where verification is hardest: with the depth of the neural network, with the number of input dimensions, and when the neural network is applied iteratively over multiple time steps.

\paragraph{Limitations}
Our enclosure bounds the output of the neural network, not the output of the system that the neural network approximates.
Due to our sign-constrained architecture, interval bound propagation returns the same enclosure; our advantages are that two forward passes suffice, which is $10\times$ to $11\times$ faster in our experiments, and that the size of the enclosure is directly minimized during training.
We restrict the decomposition function to the difference of two monotone functions~\eqref{eq:decomp_diff}, to which the lower bound of~\cref{prop:lowerbound} applies; more general decomposition functions can yield tighter enclosures and are future work.
Finally, we only consider regression tasks, and our evaluation over multiple time steps is restricted to a cooperative system with a constant input.

\section{AI Use Statement}
In this work, we used generative AI tools for  
drafting sections, 
identifying and summarizing relevant literature, 
assisting during the implementation and in the writing of proofs,
editing to improve readability of the paper, and
creating or modifying TikZ figures or images.
We have reviewed all AI-assisted work to ensure its correctness. 
We have carefully checked the correctness of our AI-assisted implementation and our reported results are reproducible using the included supplementary material. 
We have checked every claim and reference against its source.

\section{Reproducibility Statement}
All benchmark definitions and evaluation details are included in~\Cref{sec:benchmarks,sec:metrics,app:details}.
Additionally, we include a reproducibility package as supplementary material with this submission, which contains all source code and allows the reader to reproduce all results contained in the paper.

\section*{Acknowledgements}

This work was partially supported by the project SPP-2422 (No. 500936349) funded by the German Research Foundation (DFG).
 
\bibliography{refs.bib}

\begin{thebibliography}{36}
\providecommand{\natexlab}[1]{#1}
\providecommand{\url}[1]{\texttt{#1}}
\expandafter\ifx\csname urlstyle\endcsname\relax
  \providecommand{\doi}[1]{doi: #1}\else
  \providecommand{\doi}{doi: \begingroup \urlstyle{rm}\Url}\fi

\bibitem[Althoff(2015)]{althoff_2015}
Matthias Althoff.
\newblock An introduction to {CORA} 2015.
\newblock In \emph{Proc. of the Workshop on Applied Verification of Continuous
  and Hybrid Systems (ARCH)}, volume~34 of \emph{EPiC Series in Computing},
  pp.\  120--151, 2015.

\bibitem[Althoff(2023)]{althoff_2023}
Matthias Althoff.
\newblock Checking and establishing reachset conformance in {CORA} 2023.
\newblock In \emph{Proc. of the Workshop on Applied Verification of Continuous
  and Hybrid Systems (ARCH)}, volume~96 of \emph{EPiC Series in Computing},
  pp.\  9--33, 2023.

\bibitem[Bunel et~al.(2020)Bunel, Turkaslan, Torr, Kumar, Lu, and
  Kohli]{bunel_et_al_2020}
Rudy Bunel, Ilker Turkaslan, Philip H.~S. Torr, M.~Pawan Kumar, Jingyue Lu, and
  Pushmeet Kohli.
\newblock Branch and bound for piecewise linear neural network verification.
\newblock \emph{Journal of Machine Learning Research}, 21\penalty0
  (42):\penalty0 1--39, 2020.

\bibitem[Clarkson \& Adams(1933)Clarkson and Adams]{Clarkson1933OnDO}
James~A. Clarkson and Clifford~R. Adams.
\newblock On definitions of bounded variation for functions of two variables.
\newblock \emph{Transactions of the American Mathematical Society},
  35:\penalty0 824--854, 1933.

\bibitem[Cohn(1993)]{cohn1993measure}
Donald~L. Cohn.
\newblock \emph{Measure Theory}.
\newblock Birkh{\"a}user, Boston, 1993.

\bibitem[Coogan(2020)]{coogan_2020}
Samuel Coogan.
\newblock Mixed monotonicity for reachability and safety in dynamical systems.
\newblock In \emph{IEEE Conference on Decision and Control (CDC)}, pp.\
  5074--5085, 2020.

\bibitem[Coogan \& Arcak(2015)Coogan and Arcak]{coogan_arcak_2015}
Samuel Coogan and Murat Arcak.
\newblock Efficient finite abstraction of mixed monotone systems.
\newblock In \emph{Hybrid Systems: Computation and Control (HSCC)}, pp.\
  58--67, 2015.

\bibitem[Daniels \& Velikova(2010)Daniels and Velikova]{daniels_velikova_2010}
Hennie Daniels and Marina Velikova.
\newblock Monotone and partially monotone neural networks.
\newblock \emph{IEEE Transactions on Neural Networks}, 21\penalty0
  (6):\penalty0 906--917, 2010.

\bibitem[Girard(2005)]{girard_2005}
Antoine Girard.
\newblock Reachability of uncertain linear systems using zonotopes.
\newblock In \emph{Hybrid Systems: Computation and Control (HSCC)}, pp.\
  291--305, 2005.

\bibitem[Gowal et~al.(2019)Gowal, Dvijotham, Stanforth, Bunel, Qin, Uesato,
  Arandjelovic, Mann, and Kohli]{gowal_2019}
Sven Gowal, Krishnamurthy Dvijotham, Robert Stanforth, Rudy Bunel, Chongli Qin,
  Jonathan Uesato, Relja Arandjelovic, Timothy~A. Mann, and Pushmeet Kohli.
\newblock Scalable verified training for provably robust image classification.
\newblock In \emph{IEEE/CVF International Conference on Computer Vision
  (ICCV)}, pp.\  4841--4850, 2019.

\bibitem[Jafarpour et~al.(2023)Jafarpour, Harapanahalli, and
  Coogan]{jafarpour_2023}
Saber Jafarpour, Akash Harapanahalli, and Samuel Coogan.
\newblock Interval reachability of nonlinear dynamical systems with neural
  network controllers.
\newblock In \emph{Learning for Dynamics and Control Conference (L4DC)}, volume
  211 of \emph{Proceedings of Machine Learning Research}, pp.\  12--25, 2023.

\bibitem[Katz et~al.(2017)Katz, Barrett, Dill, Julian, and
  Kochenderfer]{katz_2017}
Guy Katz, Clark Barrett, David~L. Dill, Kyle Julian, and Mykel~J. Kochenderfer.
\newblock {R}eluplex: An efficient {SMT} solver for verifying deep neural
  networks.
\newblock In \emph{Computer Aided Verification (CAV)}, pp.\  97--117, 2017.

\bibitem[Khajenejad \& Yong(2023)Khajenejad and Yong]{khajenejad_2023}
Mohammad Khajenejad and Sze~Zheng Yong.
\newblock Tight remainder-form decomposition functions with applications to
  constrained reachability and guaranteed state estimation.
\newblock \emph{IEEE Transactions on Automatic Control}, 68\penalty0
  (12):\penalty0 7057--7072, 2023.

\bibitem[Kitouni et~al.(2023)Kitouni, Nolte, and Williams]{kitouni_2023}
Ouail Kitouni, Niklas Nolte, and Mike Williams.
\newblock Robust and provably monotonic networks.
\newblock \emph{Machine Learning: Science and Technology}, 4\penalty0
  (3):\penalty0 035020, 2023.

\bibitem[Kochdumper \& Althoff(2021)Kochdumper and
  Althoff]{kochdumper_althoff_2021}
Niklas Kochdumper and Matthias Althoff.
\newblock Sparse polynomial zonotopes: A novel set representation for
  reachability analysis.
\newblock \emph{IEEE Transactions on Automatic Control}, 66\penalty0
  (9):\penalty0 4043--4058, 2021.

\bibitem[Kochdumper et~al.(2023)Kochdumper, Schilling, Althoff, and
  Bak]{kochdumper_2023}
Niklas Kochdumper, Christian Schilling, Matthias Althoff, and Stanley Bak.
\newblock Open- and closed-loop neural network verification using polynomial
  zonotopes.
\newblock In \emph{NASA Formal Methods (NFM)}, pp.\  16--36, 2023.

\bibitem[Koller et~al.(2025)Koller, Ladner, and
  Althoff]{koller_et_al_2025_tmlr}
Lukas Koller, Tobias Ladner, and Matthias Althoff.
\newblock Set-based training for neural network verification.
\newblock \emph{Transactions on Machine Learning Research}, 2025.

\bibitem[Koller et~al.(2026)Koller, Ladner, and
  Althoff]{koller_et_al_2025_shadow}
Lukas Koller, Tobias Ladner, and Matthias Althoff.
\newblock Out of the shadows: Exploring a latent space for neural network
  verification.
\newblock In \emph{International Conference on Learning Representations
  (ICLR)}, 2026.

\bibitem[Ladner \& Althoff(2023)Ladner and Althoff]{ladner_althoff_2023}
Tobias Ladner and Matthias Althoff.
\newblock Automatic abstraction refinement in neural network verification using
  sensitivity analysis.
\newblock In \emph{Hybrid Systems: Computation and Control (HSCC)}, 2023.

\bibitem[Liu et~al.(2020)Liu, Han, Zhang, and Liu]{liu_2020}
Xingchao Liu, Xing Han, Na~Zhang, and Qiang Liu.
\newblock Certified monotonic neural networks.
\newblock In \emph{Advances in Neural Information Processing Systems
  (NeurIPS)}, volume~33, 2020.

\bibitem[Meyer(2022)]{meyer_2022}
Pierre-Jean Meyer.
\newblock Reachability analysis of neural networks using mixed monotonicity.
\newblock \emph{IEEE Control Systems Letters}, 6:\penalty0 3068--3073, 2022.

\bibitem[M{\"u}ller et~al.(2023)M{\"u}ller, Eckert, Fischer, and
  Vechev]{mueller_et_al_2023_sabr}
Mark~Niklas M{\"u}ller, Franziska Eckert, Marc Fischer, and Martin Vechev.
\newblock Certified training: Small boxes are all you need.
\newblock In \emph{International Conference on Learning Representations
  (ICLR)}, 2023.

\bibitem[Runje \& Shankaranarayana(2023)Runje and Shankaranarayana]{runje_2023}
Davor Runje and Sharath~M. Shankaranarayana.
\newblock Constrained monotonic neural networks.
\newblock In \emph{International Conference on Machine Learning (ICML)}, pp.\
  29338--29353, 2023.

\bibitem[Sill(1997)]{sill_1997}
Joseph Sill.
\newblock Monotonic networks.
\newblock In \emph{Advances in Neural Information Processing Systems
  (NeurIPS)}, volume~10, 1997.

\bibitem[Singh et~al.(2018)Singh, Gehr, Mirman, P{\"u}schel, and
  Vechev]{singh_2018}
Gagandeep Singh, Timon Gehr, Matthew Mirman, Markus P{\"u}schel, and Martin
  Vechev.
\newblock Fast and effective robustness certification.
\newblock In \emph{Advances in Neural Information Processing Systems
  (NeurIPS)}, volume~31, 2018.

\bibitem[Singh et~al.(2019)Singh, Gehr, P{\"u}schel, and
  Vechev]{singh_et_al_2019deeppoly}
Gagandeep Singh, Timon Gehr, Markus P{\"u}schel, and Martin Vechev.
\newblock An abstract domain for certifying neural networks.
\newblock \emph{Proceedings of the ACM on Programming Languages}, 3\penalty0
  (POPL):\penalty0 41:1--41:30, 2019.

\bibitem[Sivaraman et~al.(2020)Sivaraman, Farnadi, Millstein, and Van~den
  Broeck]{sivaraman_2020}
Aishwarya Sivaraman, Golnoosh Farnadi, Todd Millstein, and Guy Van~den Broeck.
\newblock Counterexample-guided learning of monotonic neural networks.
\newblock In \emph{Advances in Neural Information Processing Systems
  (NeurIPS)}, volume~33, 2020.

\bibitem[Smith(1995)]{smith_1995}
Hal~L. Smith.
\newblock \emph{Monotone Dynamical Systems: An Introduction to the Theory of
  Competitive and Cooperative Systems}, volume~41 of \emph{Mathematical Surveys
  and Monographs}.
\newblock American Mathematical Society, 1995.

\bibitem[Towers et~al.(2024)Towers, Kwiatkowski, Terry, Balis, De~Cola, Deleu,
  Goul{\~a}o, Kallinteris, Krimmel, KG, et~al.]{towers_2024}
Mark Towers, Ariel Kwiatkowski, Jordan Terry, John~U. Balis, Gianluca De~Cola,
  Tristan Deleu, Manuel Goul{\~a}o, Andreas Kallinteris, Markus Krimmel, Arjun
  KG, et~al.
\newblock Gymnasium: A standard interface for reinforcement learning
  environments.
\newblock arXiv:2407.17032, 2024.

\bibitem[Wendl et~al.(2026)Wendl, Koller, Ladner, and
  Althoff]{wendl_et_al_2026_tmlr}
Manuel Wendl, Lukas Koller, Tobias Ladner, and Matthias Althoff.
\newblock Training verifiably robust agents using set-based reinforcement
  learning.
\newblock \emph{Transactions on Machine Learning Research}, 2026.

\bibitem[Xu et~al.(2020)Xu, Shi, Zhang, Wang, Chang, Huang, Kailkhura, Lin, and
  Hsieh]{xu_2020}
Kaidi Xu, Zhouxing Shi, Huan Zhang, Yihan Wang, Kai-Wei Chang, Minlie Huang,
  Bhavya Kailkhura, Xue Lin, and Cho-Jui Hsieh.
\newblock Automatic perturbation analysis for scalable certified robustness and
  beyond.
\newblock In \emph{Advances in Neural Information Processing Systems
  (NeurIPS)}, volume~33, 2020.

\bibitem[Xu et~al.(2021)Xu, Zhang, Wang, Wang, Jana, Lin, and Hsieh]{xu_2021}
Kaidi Xu, Huan Zhang, Shiqi Wang, Yihan Wang, Suman Jana, Xue Lin, and Cho-Jui
  Hsieh.
\newblock Fast and complete: Enabling complete neural network verification with
  rapid and massively parallel incomplete verifiers.
\newblock In \emph{International Conference on Learning Representations
  (ICLR)}, 2021.

\bibitem[Yang et~al.(2019)Yang, Mickelin, and Ozay]{yang_2019}
Liren Yang, Oscar Mickelin, and Necmiye Ozay.
\newblock On sufficient conditions for mixed monotonicity.
\newblock \emph{IEEE Transactions on Automatic Control}, 64\penalty0
  (12):\penalty0 5080--5085, 2019.

\bibitem[Ye et~al.(2022)Ye, Wang, Wang, and Tie]{ye_2022}
Cunliang Ye, Yongfu Wang, Yunlong Wang, and Ming Tie.
\newblock Steering angle prediction {YOLOv5}-based end-to-end adaptive neural
  network control for autonomous vehicles.
\newblock \emph{Proceedings of the Institution of Mechanical Engineers, Part D:
  Journal of Automobile Engineering}, 236\penalty0 (9):\penalty0 1991--2011,
  2022.

\bibitem[You et~al.(2017)You, Ding, Canini, Pfeifer, and Gupta]{you_2017}
Seungil You, David Ding, Kevin Canini, Jan Pfeifer, and Maya Gupta.
\newblock Deep lattice networks and partial monotonic functions.
\newblock In \emph{Advances in Neural Information Processing Systems
  (NeurIPS)}, volume~30, 2017.

\bibitem[Zhang et~al.(2018)Zhang, Weng, Chen, Hsieh, and Daniel]{zhang_2018}
Huan Zhang, Tsui-Wei Weng, Pin-Yu Chen, Cho-Jui Hsieh, and Luca Daniel.
\newblock Efficient neural network robustness certification with general
  activation functions.
\newblock In \emph{Advances in Neural Information Processing Systems
  (NeurIPS)}, volume~31, 2018.

\end{thebibliography}
\bibliographystyle{iclr2027_conference}

\appendix

\section{Appendix}

\subsection{Proofs}
\label{app:proofs}

\begin{repproposition}{prop:lowerbound}
  \lowerBoundProp
  \begin{proof}
    We first compute the size of the enclosure~$\opEnclose{\NN}{\nnInputSet}$, whose lower and upper bounds are $\monoNNInc(\boxlo) - \monoNNDec(\boxhi)$ and $\monoNNInc(\boxhi) - \monoNNDec(\boxlo)$~(\cref{prop:enclosure}).
    Since $\boxlo \leq \boxhi$ and $\monoNNInc,\monoNNDec$ are non-decreasing on~$\nnInputSet$~(\cref{def:monotone}), the difference between the upper and the lower bound is
    \begin{equation*}
      \bigl(\monoNNInc(\boxhi) - \monoNNDec(\boxlo)\bigr) - \bigl(\monoNNInc(\boxlo) - \monoNNDec(\boxhi)\bigr)
      = \bigl(\monoNNInc(\boxhi) - \monoNNInc(\boxlo)\bigr) + \bigl(\monoNNDec(\boxhi) - \monoNNDec(\boxlo)\bigr) \geq \zeros\text{.}
    \end{equation*}
    Hence, $\opEnclose{\NN}{\nnInputSet}$ is a non-empty interval, and its interval norm~(\cref{sec:prelim}) is
    \begin{equation}\label{eq:enclosure-size}
      \size{\opEnclose{\NN}{\nnInputSet}}
      = \ones^\top\bigl(\monoNNInc(\boxhi) - \monoNNInc(\boxlo) + \monoNNDec(\boxhi) - \monoNNDec(\boxlo)\bigr)\text{.}
    \end{equation}

    Next, we show that $\ones^\top\monVar{\NN}{\nnInputSet} \leq \size{\opEnclose{\NN}{\nnInputSet}}$.
    Let $(\nnInput_0,\dots,\nnInput_p)$ be a chain in~$\nnInputSet$.
    For all $\ell\in[p]$, we have $\nnInput_{\ell-1} \leq \nnInput_\ell$, so that $\monoNNInc(\nnInput_\ell) - \monoNNInc(\nnInput_{\ell-1}) \geq \zeros$ and $\monoNNDec(\nnInput_\ell) - \monoNNDec(\nnInput_{\ell-1}) \geq \zeros$~(\cref{def:monotone}).
    With $\NN = \monoNNInc - \monoNNDec$~\eqref{eq:arch} and the triangle inequality, we obtain
    \begin{equation}\label{eq:step-bound}
      \abs{\NN(\nnInput_\ell) - \NN(\nnInput_{\ell-1})}
      \leq \bigl(\monoNNInc(\nnInput_\ell) - \monoNNInc(\nnInput_{\ell-1})\bigr) + \bigl(\monoNNDec(\nnInput_\ell) - \monoNNDec(\nnInput_{\ell-1})\bigr)\text{.}
    \end{equation}
    We sum~\eqref{eq:step-bound} over all $\ell\in[p]$.
    On the right-hand side, all intermediate terms cancel; for~$\monoNNInc$, we obtain
    \begin{equation*}
      \sum_{\ell=1}^{p} \bigl(\monoNNInc(\nnInput_\ell) - \monoNNInc(\nnInput_{\ell-1})\bigr)
      = \monoNNInc(\nnInput_p) - \monoNNInc(\nnInput_0)
      \leq \monoNNInc(\boxhi) - \monoNNInc(\boxlo)\text{,}
    \end{equation*}
    where the inequality holds because $\boxlo \leq \nnInput_0$, $\nnInput_p \leq \boxhi$, and $\monoNNInc$ is non-decreasing on~$\nnInputSet$; the same holds for~$\monoNNDec$.
    Therefore,
    \begin{equation}\label{eq:chain-bound}
      \sum_{\ell=1}^{p} \abs{\NN(\nnInput_\ell) - \NN(\nnInput_{\ell-1})}
      \leq \monoNNInc(\boxhi) - \monoNNInc(\boxlo) + \monoNNDec(\boxhi) - \monoNNDec(\boxlo)\text{.}
    \end{equation}
    The right-hand side of~\eqref{eq:chain-bound} does not depend on the chain; hence, it is an upper bound of the supremum in~\eqref{eq:variation}, i.e., $\monVar{\NN}{\nnInputSet} \leq \monoNNInc(\boxhi) - \monoNNInc(\boxlo) + \monoNNDec(\boxhi) - \monoNNDec(\boxlo)$.
    Multiplying both sides by~$\ones^\top$ and replacing the right-hand side by $\size{\opEnclose{\NN}{\nnInputSet}}$ using~\eqref{eq:enclosure-size} yields $\ones^\top\monVar{\NN}{\nnInputSet} \leq \size{\opEnclose{\NN}{\nnInputSet}}$.

    Finally, we show that the lower bound is attained by a suitable decomposition of~$\NN$.
    For any non-decreasing functions~$\monoNNInc[']$ and~$\monoNNDec[']$ with $\NN' = \monoNNInc['] - \monoNNDec[']$, rearranging~\eqref{eq:enclosure-size} gives
    \begin{equation}\label{eq:enclosure-size-sum}
      \size{\opEnclose{\NN'}{\nnInputSet}}
      = \ones^\top\Bigl(\bigl(\monoNNInc['](\boxhi) + \monoNNDec['](\boxhi)\bigr) - \bigl(\monoNNInc['](\boxlo) + \monoNNDec['](\boxlo)\bigr)\Bigr)\text{.}
    \end{equation}
    Hence, the size of the enclosure is determined by the values of the sum $\monoNNInc['] + \monoNNDec[']$ at the two corners of~$\nnInputSet$.
    We therefore prescribe the difference and the sum of the two functions: the difference $\monoNNInc['](\nnInput) - \monoNNDec['](\nnInput)$ must reproduce~$\NN(\nnInput)$, and the sum $\monoNNInc['](\nnInput) + \monoNNDec['](\nnInput)$ should equal the variation of~$\NN$ accumulated from the lower corner~$\boxlo$ to~$\nnInput$, i.e., for all $\nnInput \in \nnInputSet$,
    \begin{align}\label{eq:decomp-conditions}
      \monoNNInc['](\nnInput) - \monoNNDec['](\nnInput) & = \NN(\nnInput)\text{,} &
      \monoNNInc['](\nnInput) + \monoNNDec['](\nnInput) & = \monVar{\NN}{\shortI{\boxlo}{\nnInput}}\text{.}
    \end{align}
    The variation~$\monVar{\NN}{\shortI{\boxlo}{\nnInput}}$ in~\eqref{eq:decomp-conditions} is well defined: since $\boxlo \leq \nnInput \leq \boxhi$, each chain in~$\shortI{\boxlo}{\nnInput}$ is also a chain in~$\nnInputSet$, so that $\zeros \leq \monVar{\NN}{\shortI{\boxlo}{\nnInput}} \leq \monVar{\NN}{\nnInputSet}$ by~\eqref{eq:variation}, where~$\monVar{\NN}{\nnInputSet}$ is finite due to~\eqref{eq:chain-bound}.
    Adding the two equations in~\eqref{eq:decomp-conditions} eliminates~$\monoNNDec[']$, and subtracting the first from the second eliminates~$\monoNNInc[']$:
    \begin{align*}
      2\monoNNInc['](\nnInput)
       & = \underbrace{\bigl(\monoNNInc['](\nnInput) + \monoNNDec['](\nnInput)\bigr)}_{= \monVar{\NN}{\shortI{\boxlo}{\nnInput}}}
       + \underbrace{\bigl(\monoNNInc['](\nnInput) - \monoNNDec['](\nnInput)\bigr)}_{= \NN(\nnInput)}
       = \monVar{\NN}{\shortI{\boxlo}{\nnInput}} + \NN(\nnInput)\text{,} \\
      2\monoNNDec['](\nnInput)
       & = \underbrace{\bigl(\monoNNInc['](\nnInput) + \monoNNDec['](\nnInput)\bigr)}_{= \monVar{\NN}{\shortI{\boxlo}{\nnInput}}}
       - \underbrace{\bigl(\monoNNInc['](\nnInput) - \monoNNDec['](\nnInput)\bigr)}_{= \NN(\nnInput)}
       = \monVar{\NN}{\shortI{\boxlo}{\nnInput}} - \NN(\nnInput)\text{.}
    \end{align*}
    Dividing by two yields the unique solution
    \begin{align}\label{eq:optimal-decomp}
      \monoNNInc['](\nnInput) & = \tfrac{1}{2}\bigl(\monVar{\NN}{\shortI{\boxlo}{\nnInput}} + \NN(\nnInput)\bigr)\text{,} &
      \monoNNDec['](\nnInput) & = \tfrac{1}{2}\bigl(\monVar{\NN}{\shortI{\boxlo}{\nnInput}} - \NN(\nnInput)\bigr)\text{.}
    \end{align}

    It remains to show that~$\monoNNInc[']$ and~$\monoNNDec[']$ are non-decreasing on~$\nnInputSet$ and that they attain the lower bound.

    To show that~$\monoNNInc[']$ and~$\monoNNDec[']$ are non-decreasing on~$\nnInputSet$, let $\nnInput,\nnInput'\in\nnInputSet$ with $\nnInput \leq \nnInput'$.
    We first show that the accumulated variation grows at least by the change of~$\NN$ between~$\nnInput$ and~$\nnInput'$.
    Let $(\nnInput_0,\dots,\nnInput_p)$ be a chain in~$\shortI{\boxlo}{\nnInput}$.
    Since $\nnInput_p \leq \nnInput \leq \nnInput'$, appending~$\nnInput$ and~$\nnInput'$ yields the chain $(\nnInput_0,\dots,\nnInput_p,\nnInput,\nnInput')$ in~$\shortI{\boxlo}{\nnInput'}$, whose summed change is bounded by the supremum in~\eqref{eq:variation} for the interval~$\shortI{\boxlo}{\nnInput'}$:
    \begin{equation*}
      \sum_{\ell=1}^{p} \abs{\NN(\nnInput_\ell) - \NN(\nnInput_{\ell-1})}
      + \underbrace{\abs{\NN(\nnInput) - \NN(\nnInput_p)}}_{\geq \zeros}
      + \abs{\NN(\nnInput') - \NN(\nnInput)}
      \leq \monVar{\NN}{\shortI{\boxlo}{\nnInput'}}\text{.}
    \end{equation*}
    Omitting the non-negative middle term and taking the supremum over all chains in~$\shortI{\boxlo}{\nnInput}$ results in
    \begin{equation}\label{eq:variation-growth}
      \monVar{\NN}{\shortI{\boxlo}{\nnInput'}} - \monVar{\NN}{\shortI{\boxlo}{\nnInput}}
      \geq \abs{\NN(\nnInput') - \NN(\nnInput)}\text{.}
    \end{equation}
    Subtracting~\eqref{eq:optimal-decomp} at~$\nnInput$ from~\eqref{eq:optimal-decomp} at~$\nnInput'$ and bounding the growth of the accumulated variation by~\eqref{eq:variation-growth} yields
    \begin{align*}
      \monoNNInc['](\nnInput') - \monoNNInc['](\nnInput)
       & = \tfrac{1}{2}\Bigl(\underbrace{\monVar{\NN}{\shortI{\boxlo}{\nnInput'}} - \monVar{\NN}{\shortI{\boxlo}{\nnInput}}}_{\geq \abs{\NN(\nnInput') - \NN(\nnInput)} \text{ by \eqref{eq:variation-growth}}} + \bigl(\NN(\nnInput') - \NN(\nnInput)\bigr)\Bigr) \\
       & \geq \tfrac{1}{2}\Bigl(\abs{\NN(\nnInput') - \NN(\nnInput)} + \bigl(\NN(\nnInput') - \NN(\nnInput)\bigr)\Bigr)
       \geq \zeros\text{,} \\[0.5ex]
      \monoNNDec['](\nnInput') - \monoNNDec['](\nnInput)
       & = \tfrac{1}{2}\Bigl(\underbrace{\monVar{\NN}{\shortI{\boxlo}{\nnInput'}} - \monVar{\NN}{\shortI{\boxlo}{\nnInput}}}_{\geq \abs{\NN(\nnInput') - \NN(\nnInput)} \text{ by \eqref{eq:variation-growth}}} - \bigl(\NN(\nnInput') - \NN(\nnInput)\bigr)\Bigr) \\
       & \geq \tfrac{1}{2}\Bigl(\abs{\NN(\nnInput') - \NN(\nnInput)} - \bigl(\NN(\nnInput') - \NN(\nnInput)\bigr)\Bigr)
       \geq \zeros\text{,}
    \end{align*}
    where the last inequalities hold component-wise because $\abs{a} + a \geq 0$ and $\abs{a} - a \geq 0$ for every $a \in \mathbb{R}$.

    As~$\monoNNInc[']$ and~$\monoNNDec[']$ are non-decreasing on~$\nnInputSet$, \eqref{eq:enclosure-size-sum} holds for~$\NN'$, and we evaluate the sums at the two corners using~\eqref{eq:decomp-conditions}.
    At $\nnInput = \boxhi$, the interval~$\shortI{\boxlo}{\boxhi}$ is~$\nnInputSet$.
    At $\nnInput = \boxlo$, the interval~$\shortI{\boxlo}{\boxlo}$ only contains the point~$\boxlo$, so that every chain in it is constant, all summands in~\eqref{eq:variation} are zero, and $\monVar{\NN}{\shortI{\boxlo}{\boxlo}} = \zeros$.
    Hence,
    \begin{equation*}
      \size{\opEnclose{\NN'}{\nnInputSet}}
      = \ones^\top\Bigl(\underbrace{\monoNNInc['](\boxhi) + \monoNNDec['](\boxhi)}_{= \monVar{\NN}{\shortI{\boxlo}{\boxhi}} = \monVar{\NN}{\nnInputSet}} - \underbrace{\bigl(\monoNNInc['](\boxlo) + \monoNNDec['](\boxlo)\bigr)}_{= \monVar{\NN}{\shortI{\boxlo}{\boxlo}} = \zeros}\Bigr)
      = \ones^\top\monVar{\NN}{\nnInputSet}\text{.}
    \end{equation*}
    Thus, $\monoNNInc[']$ and~$\monoNNDec[']$ are non-decreasing on~$\nnInputSet$, satisfy $\monoNNInc['] - \monoNNDec['] = \NN$ on~$\nnInputSet$, and attain the lower bound, which proves the second statement.
  \end{proof}
\end{repproposition}

\subsection{Benchmark definitions}\label{sec:benchmarks}
In this subsection, we define the scalar analytic model, the two dynamical systems, and the multi-input analytic model.

\paragraph{Scalar analytic model} The neural network is trained to approximate the model
\begin{equation}
  f(\nnInput) = \sin \nnInput + 0.3 \sin 3\nnInput + 0.05\nnInput
\end{equation}
on~$\shortI{-3}{3}$.
This model is not monotone, as it has six local extrema in~$\shortI{-3}{3}$.

\paragraph{Dynamical systems}
A system $\dot{\nnInput} = f(\nnInput,u)$ is cooperative if it satisfies the Kamke--M\"uller condition~\citep{smith_1995}
\begin{equation}
  \frac{\partial f_{(i)}}{\partial \nnInput_{(j)}} \geq 0
  \ \ \forall i \neq j\text{,}
  \qquad
  \frac{\partial f}{\partial u} \geq 0\text{.}
  \label{eq:kamke}
\end{equation}
Every cooperative system is also a monotone system in the sense that, for a constant input, all solutions starting in an interval~$\shortI{\boxlo}{\boxhi}$ are bounded by the two solutions starting in~$\boxlo$ and~$\boxhi$~\citep{smith_1995}.

The Pendulum system uses the dynamics of the Gymnasium environment \texttt{Pendulum-v1}~\citep{towers_2024}; due to the $\sin$ term, it violates~\eqref{eq:kamke} and is thus non-cooperative.
The Tanks system consists of two cascaded tanks with the fill levels~$\nnInput_{(1)}$ and~$\nnInput_{(2)}$ and the inflow~$u$:
\begin{align}
  \begin{split}
    \dot{\nnInput}_{(1)} & = -k_1\sqrt{\nnInput_{(1)}} + k_4 u\text{,} \\
    \dot{\nnInput}_{(2)} & = k_2\sqrt{\nnInput_{(1)}} - k_3\sqrt{\nnInput_{(2)}}\text{,}
  \end{split}
\end{align}
with $k_1 = 0.5$, $k_2 = 0.4$, $k_3 = 0.3$, $k_4 = 0.12$, and~$u\in\shortI{0}{10}$.
The system satisfies~\eqref{eq:kamke} and is thus cooperative.
We integrate the system with a fourth-order Runge--Kutta scheme and use a time step of~$4\,\mathrm{s}$.

For the prediction accuracy~(\cref{tab:accuracy-main}), the neural network is trained to predict the change of the state within one time step from the current state and the input; for the verification-aware training~(\cref{tab:beta}) and the evaluation over multiple time steps~(\cref{sec:closedloop}), which both use the Tanks system, it is trained to predict the next state.
For these two evaluations, we predict the next state, because the Tanks system is cooperative, so that the map from the current state to the next state is monotone for a constant input, whereas the change of the state does not need to be monotone.
For the Tanks system with eight layers, the tightness~(\cref{sec:eval}) of our enclosure is~$1.08$ for the next state and~$1.37$ for the change of the state ($\epsilon = 0.05$, ten seeds).

\paragraph{Multi-input analytic model}
The neural network is trained to approximate the model
\begin{equation}\label{eq:synthetic}
  f(\nnInput) = \sin \nnInput_{(1)} + 0.3 \sin 3\nnInput_{(2)} + \frac{0.05}{\numNeurons_0}\sum_{i=1}^{\numNeurons_0} \nnInput_{(i)}
\end{equation}
with~$\numNeurons_0$ inputs in~$\shortI{-3}{3}$, of which only~$\nnInput_{(1)}$ and~$\nnInput_{(2)}$ enter nonlinearly.
We perturb either all inputs by the same radius, so that the size of the input set increases with~$\numNeurons_0$, or only~$\nnInput_{(1)}$ and~$\nnInput_{(2)}$, so that the size of the input set does not depend on~$\numNeurons_0$.

\subsection{Computation of the tightness}
\label{sec:metrics}

Since the exact image~$\NN(\nnInputSet)$ is not available, we evaluate the neural network at uniformly sampled inputs in~$\nnInputSet$ and at the corners of~$\nnInputSet$ (for~$\numNeurons_0 > 12$, only at~$\boxlo$ and~$\boxhi$).
The resulting outputs are a subset of~$\NN(\nnInputSet)$; thus, their size is at most~$\size{\NN(\nnInputSet)}$.
As we divide the size of the enclosure by the size of the sampled outputs, the reported tightness is an upper bound of the tightness with respect to the exact image.
For the dynamical systems, the inputs and outputs of the neural network are standardized using the mean and the standard deviation of the training data; the perturbation radii refer to the standardized inputs, and the sizes of the enclosures and of the sampled outputs are computed in the original units of the outputs.
For the static analytic models, the inputs and outputs are not rescaled.
For each seed, we first average the sizes of the enclosures and the sizes of the sampled outputs over all input sets and then compute their ratio, so that input sets with a small image do not dominate the result; we report the mean and the standard deviation of this ratio over the seeds.
We mark a configuration with~$\dagger$ if the prediction accuracy of at least one seed is below~$R^2 = 0.9$; we do not use marked configurations to compare architectures or verification methods.
The numbers of input sets and seeds of each experiment are listed in~\cref{tab:settings}, and the runtime measurements are described in~\cref{app:timing}.

\subsection{Experimental details}
\label{app:details}

\paragraph{Architectures}
Each of our sign-constrained subnetworks has~$32$ neurons per hidden layer, of which we assign one third to each of the activation functions~$\relu(\nnInput)$, $-\relu(-\nnInput)$, and~$\tanh(\nnInput)$.
For multiple outputs, each output has its own pair of subnetworks.
We compare our mixed-monotone neural networks with unconstrained neural networks (baseline).
Additionally, we use the monotone Lipschitz-constrained neural networks of~\citet{kitouni_2023} as subnetworks in~\eqref{eq:arch} (Lip.-res.), where each subnetwork has the form $\nnInput \mapsto \lipBudget\,\ones^\top\nnInput + h(\nnInput)$ with a neural network~$h$ whose Lipschitz constant with respect to the $\ell_1$-norm is at most~$\lipBudget$, and we consider non-monotone neural networks that use the Lipschitz-constrained layers of~\citet{kitouni_2023} without the linear function (Lip.).
The number of parameters of each architecture differs from that of our neural networks by at most~$2\,\%$; to this end, we adapt the number of neurons per hidden layer of the baseline and of the non-monotone Lipschitz-constrained neural networks.
Within each experiment, all architectures use the same data splits, optimizer settings, and number of epochs, which we did not tune per architecture.

\paragraph{Initialization}
We initialize each weight of a sign-constrained linear layer with the absolute value of a sample from a standard normal distribution, which we scale such that the expected value of each weight is~$c/\numNeurons_{k-1}$ with~$c = 1.4$.
In contrast to weights with zero mean, non-negative weights do not cancel each other in the weighted sum of a neuron: for weights that are independent of the inputs, the expected weighted sum of a neuron is $\numNeurons_{k-1}$ times the expected weight times the mean of its inputs.
For the expected weight~$c/\numNeurons_{k-1}$, this expected value does not depend on~$\numNeurons_{k-1}$, whereas it is larger by the factor~$\sqrt{\numNeurons_{k-1}}$ for the common scaling by~$1/\sqrt{\numNeurons_{k-1}}$, which can accumulate over the layers.
In our experiments, the magnitude of the outputs of all layers remains within~$\shortI{0.3}{6}$ for the evaluated numbers of layers.
The training is sensitive to~$c$, e.g., for~$c = 5$, the neural network does not fit the scalar analytic model.

\paragraph{Training}
We train all neural networks with Adam using a learning rate of~$2\cdot10^{-3}$ and a batch size of~$64$, without learning-rate decay and without early stopping; the number of epochs is listed in~\cref{tab:settings}.
For~$\widthWeight > 0$, we increase the weight of the enclosure size in the training loss~\eqref{eq:loss} in three stages of the training from~$0$ to~$\widthWeight/2$ to~$\widthWeight$, so that the neural network fits the data before the enclosure is tightened; the state of the optimizer is kept between the stages.
In each training iteration, we evaluate the regression loss for~$64$ training inputs and the enclosure size for~$8$ input sets with the radius~$\epsilon = 0.2$~(\cref{sec:metrics}), which are centered at randomly drawn training inputs and clipped to the training domain.
The~$64$ training inputs and the~$16$ corners of the input sets are evaluated in a single forward pass of one batch.
The input sets are drawn once per stage, resulting in~$\lfloor N/64 \rfloor \cdot 8$ different input sets for~$N$ training inputs.

\paragraph{Non-negative weights}\label{app:nonnegative-weights}
The sign-constrained subnetworks require non-negative weights in the forward pass~(\cref{def:neural_network}).
We use~$\abs{\Weights}$ in the forward pass and optimize the unconstrained weights~$\Weights$, so that the gradient of the training loss follows from the chain rule:
\begin{equation}
  \frac{\partial \pointLoss}{\partial \Weights}
  = \frac{\partial \pointLoss}{\partial \abs{\Weights}}
  \odot \sign{\Weights}\text{,}
  \label{eq:reparam}
\end{equation}
where~$\odot$ denotes the elementwise product; for entries that are zero, where the absolute value is not differentiable, we use~$+1$, which is an element of the subdifferential~$\shortI{-1}{1}$ of the absolute value at zero; with the subgradient~$0$, a weight that is exactly zero would not be updated anymore.
Hence, the weights used in the forward pass are non-negative throughout the training, which preserves the monotonicity of both subnetworks.
An alternative is to project the weights onto the non-negative values during the training, i.e., $\Weights \leftarrow \max(\Weights,\zeros)$.
In a preliminary experiment on the scalar analytic model, in which we applied this projection every~$30$ epochs, $32.3\,\%$ of the weights were exactly zero after~$120$ epochs and the prediction accuracy decreased from~$R^2 = 0.98$ to~$0.76$; therefore, we use the absolute value instead.

\paragraph{Settings of the experiments}
\cref{tab:settings} lists the settings of all experiments.
The neural networks for the comparison of verification methods~(\cref{tab:verifiers-main,tab:lirpa}) and for the comparison across the number of layers~(\cref{tab:crossover}) are trained separately, both with~$\widthWeight = 0$; therefore, the tightness of our enclosure in~\cref{tab:verifiers-main} and~\cref{tab:crossover} differs for the same number of layers.
The tightness of an enclosure is estimated with~$800$ samples per input set ($2000$ for the evaluation over multiple time steps).
All random choices of an experiment, i.e., the dataset, its split, the initialization of the weights, the input sets, and the samples, are derived from a single seed.
For the dynamical systems, we split the dataset by trajectories, so that the states of one trajectory are not distributed over the training, validation, and test data.
\begin{table}[h]
  \centering
  \caption{Settings of the experiments.}
  \label{tab:settings}
  \setlength{\tabcolsep}{5pt}
  \begin{tabular}{l l c c c c c}
    \toprule
    \textbf{Experiment} & \textbf{Benchmark} & \textbf{\#Layers}~$\numLayers$ & $\widthWeight$ & \textbf{\#Seeds} & \textbf{\#Epochs} & \textbf{\#Input Sets} \\
    \midrule
    \cref{tab:accuracy-main,tab:accuracy} & Scalar analytic & 2, 3, 4, 6, 8 & 0    & 10 & 120 & --  \\
    \cref{tab:accuracy-main}              & Pendulum, Tanks & 3, 8          & 0    & 10 & 100 & --  \\
    \cref{tab:verifiers-main,tab:lirpa}   & Scalar analytic & 2, 3, 4, 6, 8 & 0    & 5  & 120 & 100 \\
    \cref{tab:beta}                       & Scalar analytic, Tanks & 3      & 0--1 & 3  & 80  & 150 \\
    \cref{tab:dims-main,tab:dims}         & Multi-input analytic & 4        & 0.05 & 5  & 120 & 100 \\
    \cref{tab:crossover}                  & Scalar analytic & 2, 3, 4, 6, 8 & 0    & 10 & 120 & 150 \\
    \cref{fig:closedloop}                 & Tanks           & 3, 8          & 0.05 & 3  & 100 & 20  \\
    \bottomrule
  \end{tabular}
\end{table}

\subsection{Additional results on the prediction accuracy}
\label{app:accuracy}

\Cref{tab:accuracy} shows the prediction accuracy on the scalar analytic model for all tested numbers of layers and architectures.
The baseline remains accurate for all numbers of layers, and the accuracy of our architecture differs from it by at most~$0.01$.
In contrast, both Lipschitz-constrained architectures lose accuracy at six and eight layers when trained with the same settings: at eight layers, the monotone Lipschitz-constrained architecture reaches~$R^2=0.752\pm0.048$ and the non-monotone one~$R^2=0.635\pm0.074$.

\begin{table}[h]
  \centering
  \caption{Comparing the prediction accuracy of different architectures on the scalar analytic model shown as mean $\pm$ standard deviation over ten seeds; bold font marks the best architecture per number of layers.}
  \label{tab:accuracy}
  \begin{tabular}{c R@{$\,\pm\,$}L R@{$\,\pm\,$}L R@{$\,\pm\,$}L R@{$\,\pm\,$}L}
    \toprule
    \multirow{2}{*}{\textbf{\#Layers}~$\numLayers$}
     & \multicolumn{8}{c}{\textbf{Architecture}} \\
    \cmidrule(lr){2-9}
     & \multicolumn{2}{c}{\makecell{Baseline \\(unconstr.)}}
     & \multicolumn{2}{c}{\makecell{Ours \\(sign-constr)}}
     & \multicolumn{2}{c}{\makecell{Lip.-res. (monotone) \\\citet{kitouni_2023}}}
     & \multicolumn{2}{c}{\makecell{Lip. \\(non-monotone)}} \\
    \midrule
    2
     & 0.984          & 0.002 & \mathbf{0.986} & 0.001 & 0.973 & 0.002 & 0.972 & 0.002 \\
    3
     & \mathbf{0.999} & 0.002 & 0.992          & 0.003 & 0.983 & 0.001 & 0.983 & 0.001 \\
    4
     & \mathbf{1.000} & 0.000 & 0.990          & 0.005 & 0.982 & 0.001 & 0.981 & 0.001 \\
    6
     & \mathbf{1.000} & 0.000 & 0.992          & 0.006 & 0.876 & 0.022 & 0.838 & 0.028 \\
    8
     & \mathbf{1.000} & 0.000 & 0.996          & 0.004 & 0.752 & 0.048 & 0.635 & 0.074 \\
    \bottomrule
  \end{tabular}
\end{table}

\subsection{Additional results on the comparison of verification methods}
\label{app:crown}

\Cref{tab:lirpa} extends~\cref{tab:verifiers-main} to all tested numbers of layers.
CROWN is tighter than our enclosure at two and three layers, and our enclosure is tighter at six and eight layers; at four layers, the difference is smaller than the standard deviation over the seeds.
The tightness of $\alpha$-CROWN, which optimizes its bounds with~$20$ iterations and a learning rate of~$0.1$, is at most~$1.004$ for all numbers of layers.

\begin{table}[t]
  \centering
  \papertablestyle[2pt]
  \caption{Comparing enclosure tightness and runtimes of all verification methods on the scalar analytic model at~$\epsilon = 0.05$ for all tested numbers of layers.
    The tightness is shown as mean $\pm$ standard deviation over five seeds, and the runtimes are relative to our enclosure within the same toolbox; bold font marks the best tightness per number of layers.}
  \label{tab:lirpa}
  \resizebox{\textwidth}{!}{%
    \begin{tabular}{c R@{$\,\pm\,$}L R@{$\,\pm\,$}L R@{$\,\pm\,$}L R@{$\,\pm\,$}L R R R R}
      \toprule
      \multirow{2}{*}{\textbf{\#Layers}~$\numLayers$}
        & \multicolumn{8}{c}{\textbf{Tightness}~$\downarrow$}
        & \multicolumn{4}{c}{\textbf{Runtime relative to ours}~$\downarrow$} \\
      \cmidrule(lr){2-9} \cmidrule(lr){10-13}
        & \multicolumn{2}{c}{Ours / IBP} & \multicolumn{2}{c}{Zonotope}
        & \multicolumn{2}{c}{CROWN}      & \multicolumn{2}{c}{$\alpha$-CROWN}
        & \text{IBP} & \text{Zono.} & \text{CROWN} & \text{$\alpha$-CR.} \\
      \midrule
      2 & 3.58 & 0.55 & 1.08 & 0.01 & 1.07 & 0.01 & \mathbf{1.001} & 0.000
        & 10.8 & 6.5  & 23.5 & 1\,490 \\
      3 & 2.00 & 0.43 & 1.14 & 0.03 & 1.12 & 0.03 & \mathbf{1.004} & 0.002
        & 8.1  & 8.4  & 54.4 & 6\,130 \\
      4 & 1.17 & 0.12 & 1.16 & 0.05 & 1.13 & 0.04 & \mathbf{1.002} & 0.002
        & 9.9  & 11.7 & 112  & 8\,750 \\
      6 & 1.07 & 0.08 & 1.25 & 0.10 & 1.18 & 0.04 & \mathbf{1.002} & 0.001
        & 10.2 & 19.6 & 193  & 12\,900 \\
      8 & 1.05 & 0.08 & 1.44 & 0.07 & 1.26 & 0.06 & \mathbf{1.002} & 0.001
        & 11.2 & 31.8 & 256  & 20\,600 \\
      \bottomrule
    \end{tabular}%
  }
\end{table}

\paragraph{Interval bound propagation}
For our architecture, IBP returns the same enclosure as~\cref{prop:enclosure}.
IBP propagates an interval~$\shortI{l}{u}$ through a linear layer with weights~$\Weights$ and bias~$\bias$ to the interval $\shortI{\max(\Weights,\zeros)\,l + \min(\Weights,\zeros)\,u + \bias}{\max(\Weights,\zeros)\,u + \min(\Weights,\zeros)\,l + \bias}$, where the maximum and the minimum are taken element-wise~\citep{gowal_2019}.
For non-negative weights, we have $\min(\Weights,\zeros) = \zeros$, so that the resulting interval is $\shortI{\Weights l + \bias}{\Weights u + \bias}$, i.e., the layer is evaluated at the lower and the upper bound of the input interval.
A non-decreasing activation function maps an interval~$\shortI{l}{u}$ to the interval between the function values at~$l$ and~$u$.
By induction over the layers, IBP thus propagates the input interval~$\shortI{\boxlo}{\boxhi}$ through~$\monoNNInc$ and~$\monoNNDec$ to the intervals~$\shortI{\monoNNInc(\boxlo)}{\monoNNInc(\boxhi)}$ and~$\shortI{\monoNNDec(\boxlo)}{\monoNNDec(\boxhi)}$.
Subtracting the second interval from the first one in interval arithmetic yields $\shortI{\monoNNInc(\boxlo) - \monoNNDec(\boxhi)}{\monoNNInc(\boxhi) - \monoNNDec(\boxlo)} = \opEnclose{\NN}{\nnInputSet}$.
The equality holds because all weights of the subnetworks are non-negative: for the Lipschitz-constrained subnetworks of~\citet{kitouni_2023}, whose weights are not sign-constrained, the size of the IBP enclosure is larger than the size of our enclosure by a factor of~$1.49$ at two layers and~$1.15$ at eight layers ($\epsilon = 0.05$, ten seeds).
We verified the equality numerically: in CORA, the sizes of both enclosures agree up to the reported precision, and for the neural networks exported to \texttt{auto\_LiRPA}, the relative difference of the sizes is at most~$1.1 \cdot 10^{-6}$ with a median of~$6 \cdot 10^{-8}$ over all exported neural networks and tested perturbation radii.

\paragraph{Export to \texttt{auto\_LiRPA}}\label{app:lirpa}
We compute the bounds of IBP, CROWN, and $\alpha$-CROWN with \texttt{auto\_LiRPA}~v0.7.2 (PyTorch~2.11.0, Python~3.11), which requires a PyTorch model; therefore, we export the neural networks trained in CORA to PyTorch.
The exported model reproduces every layer of the CORA neural network: the weights~$\Weights_k$ of the linear layers enter the forward pass as~$\abs{\Weights_k}$, and the nonlinear layers apply the three activation functions as described in~\cref{app:details}.
The subtraction~$\NN = \monoNNInc - \monoNNDec$ is implemented as a final linear layer with the fixed weights~$\cmat{I \ {-I}}$, where~$I$ is the identity matrix, so that all methods bound the output of~$\NN$ instead of bounding the two subnetworks separately.
The weights are transferred from the neural networks trained in CORA without retraining.
For every exported neural network, we check that its outputs match the outputs of the CORA neural network for~$512$ uniformly sampled inputs and that IBP in \texttt{auto\_LiRPA} returns our enclosure.

\paragraph{Runtime measurements}\label{app:timing}
All neural networks are trained on a Linux cluster.
The runtimes in CORA~(\cref{tab:verifiers-main,tab:lirpa,app:dims}) are measured on an otherwise idle machine with an Intel Core Ultra~9 285K ($24$ cores, $62$\,GB) using MATLAB R2026a and CORA v2026.1.0.
For each configuration, we measure the runtime for~$200$ input sets and report the minimum over~$25$ repetitions after one warm-up evaluation, as the mean is dominated by isolated outliers; the interquartile range is within~$10\,\%$ of the median for our enclosure and within~$20\,\%$ for zonotope propagation.
As the runtime does not depend on the values of the weights, we use randomly initialized neural networks for these measurements.
The runtimes in \texttt{auto\_LiRPA} are measured on a different machine with an AMD Ryzen~7 Microsoft Surface Edition (Zen~2, $8$ cores, $16$\,GB) and PyTorch restricted to eight threads; we report the minimum over ten repetitions after one warm-up evaluation and exclude the construction of the \texttt{BoundedModule}, which is only required once per neural network.
Therefore, we only compare runtimes that are measured within the same toolbox.
Although IBP computes the same bounds as our enclosure, IBP in \texttt{auto\_LiRPA} is eight to eleven times slower than our two forward passes for all tested numbers of layers and perturbation radii; our measurements do not separate the overhead of the toolbox from the cost of the bound computation.

\subsection{Additional results on the number of layers}
\label{app:crossover-results}

This subsection qualifies the decrease of the crossover radius with the number of layers in~\cref{tab:crossover}.
Between six and eight layers, the medians of the crossover radius differ by one step of the tested perturbation radii, and the ranges of the crossover radius over the seeds overlap; hence, our measurements do not show a further decrease of the crossover radius from six to eight layers.
At eight layers, the ratio of the size of our enclosure to the perturbation radius decreases from~$1.448$ to~$1.219$ over two orders of magnitude of the perturbation radius, while this ratio increases by a factor of twelve for zonotope propagation.

\subsection{Additional results on the input dimension}
\label{app:dims}

\Cref{tab:dims} shows the enclosure tightness for all tested input dimensions and for both ways of perturbing the inputs~(\cref{sec:benchmarks}).
Our architecture with sign-constrained subnetworks (Ours) and with Lipschitz-constrained subnetworks (Lip.-res.) is enclosed using~\cref{prop:enclosure}, and the baseline is enclosed using zonotope propagation.
If only the first two inputs are perturbed, the interval norm of the input set is approximately~$1.9$ for all input dimensions, and the tightness of our enclosure remains below~$1.05$.
If all inputs are perturbed, the interval norm of the input set and the tightness of all enclosures increase with the input dimension.
In CORA, computing our enclosure takes~$13$ to~$16\,\mu\mathrm{s}$ per input set for all tested input dimensions, while zonotope propagation through the baseline takes~$4.5$ to~$7.5$ times as long~(\cref{app:timing}).

\begin{table}[h]
  \centering
  \papertablestyle
  \caption{Comparing the enclosure tightness for different input dimensions~$\numNeurons_0$ at~$\epsilon = 0.5$, when all inputs or only the first two inputs are perturbed, shown as mean $\pm$ standard deviation over five seeds; $\dagger$ marks configurations with $R^2 < 0.9$~(\cref{sec:metrics}).}
  \label{tab:dims}
  \begin{tabular}{l R R R@{$\,\pm\,$}L R@{$\,\pm\,$}L R@{$\,\pm\,$}L}
    \toprule
    \textbf{Perturbed}
     & \numNeurons_0
     & \textbf{Input Set Size}~$\size{\nnInputSet}$
     & \multicolumn{2}{c}{\textbf{Ours}}
     & \multicolumn{2}{c}{\makecell{\textbf{Lip.-res.} \\\citet{kitouni_2023}}}
     & \multicolumn{2}{c}{\textbf{Zonotope}} \\
    \midrule
    \multirow{6}{*}{all}
     & 2   & 1.92  & 1.045 & 0.008 & 19.4         & 0.4 & 1.65         & 0.01 \\
     & 5   & 4.82  & 1.054 & 0.012 & 38.2         & 3.2 & 1.59         & 0.05 \\
     & 10  & 9.56  & 1.050 & 0.009 & 86.5^\dagger & 6.9 & 1.66         & 0.08 \\
     & 20  & 19.18 & 1.073 & 0.009 & 181^\dagger  & 37  & 1.77         & 0.09 \\
     & 50  & 47.90 & 1.105 & 0.013 & 378^\dagger  & 38  & 152^\dagger  & 12   \\
     & 100 & 95.91 & 1.167 & 0.039 & 862^\dagger  & 95  & 360^\dagger  & 31   \\
    \midrule
    \multirow{6}{*}{first two}
     & 2   & 1.92  & 1.045 & 0.008 & 19.4         & 0.4 & 1.65         & 0.01 \\
     & 5   & 1.93  & 1.047 & 0.011 & 16.1         & 1.3 & 1.54         & 0.05 \\
     & 10  & 1.90  & 1.041 & 0.006 & 19.3^\dagger & 1.8 & 1.55         & 0.07 \\
     & 20  & 1.91  & 1.045 & 0.008 & 20.3^\dagger & 4.9 & 1.52         & 0.05 \\
     & 50  & 1.91  & 1.040 & 0.005 & 16.8^\dagger & 1.8 & 2.42^\dagger & 0.16 \\
     & 100 & 1.92  & 1.028 & 0.007 & 18.9^\dagger & 2.0 & 2.60^\dagger & 0.06 \\
    \bottomrule
  \end{tabular}
\end{table}

\subsection{Details on the evaluation over multiple time steps}
\label{app:closedloop-details}

This subsection provides the details of the evaluation over multiple time steps in~\cref{sec:closedloop} and examines whether the divergence of the zonotope enclosure is caused by the order reduction of the zonotopes.
Starting from an initial input set~$\nnInputSet_0 = \shortI{\boxlo_0}{\boxhi_0}$, we enclose the reachable set of the next time step using~\cref{prop:enclosure}:
\begin{equation}
  \nnInputSet_{t+1}
  = \shortI{\boxlo_{t+1}}{\boxhi_{t+1}}
  = \shortI{\decompFun{\NN}(\boxlo_t,\boxhi_t)}
  {\decompFun{\NN}(\boxhi_t,\boxlo_t)}\text{,}
  \label{eq:closedloop}
\end{equation}
where the input of the system is constant.
As baseline, we propagate a zonotope through an unconstrained neural network that is trained on the same data: the output zonotope of one time step is used as the input set of the next time step without enclosing it by an interval, and its order is reduced to~$20$ after every time step using the method of~\citet{girard_2005}.
This evaluation therefore compares two combinations of neural network architecture and verification method; the comparison of verification methods on the same neural network is given in~\cref{tab:verifiers-main}.

For each of the two numbers of layers~$\numLayers \in \{3, 8\}$, we train three neural networks (seeds) and start from~$20$ initial input sets with a radius of~$0.2$ per neural network, which yields~$60$ runs per number of layers, each with at most~$40$ time steps and~$2000$ sampled trajectories.
In every time step, we check that all samples are contained in all enclosures up to a tolerance of~$10^{-6}$ plus~$10^{-3}$ times the largest edge length of the sampled reachable set; this check did not fail in any run.
A run ends when the size of the sampled reachable set is below~$2\cdot10^{-5}$, when the zonotope propagation fails or returns non-finite values, or after~$40$ time steps.
\cref{fig:closedloop} shows the median of the enclosure sizes over all runs in every time step.
A run whose zonotope propagation failed or returned non-finite values is counted with an infinite size in all later time steps, so that the median remains finite as long as fewer than half of the runs have ended in this way; a run that ended because the sampled reachable set contracted is not counted in later time steps.

At three layers, all zonotope enclosures remain finite, and the tightness of the zonotope enclosure increases from~$1.3$ to~$4.3$ within~$33$ time steps with a maximum of~$6.9$.
At eight layers, the median size of the zonotope enclosure exceeds~$10^6$ after eleven time steps and reaches~$2.9\cdot10^{37}$ after~$30$ time steps; in~$26$ of the~$60$ runs, the zonotope propagation fails after the size of the zonotope enclosure has exceeded~$8\cdot10^{37}$ or become non-finite, $41$ runs exceed a size of~$10^3$, and~$19$ runs stay below a size of~$10$.
Within the evaluated time steps, the sampled reachable set contracts by more than four orders of magnitude.

To examine whether this divergence is caused by the order reduction, we repeat the evaluation for the baselines with eight layers of the Tanks and the Pendulum system with five initial input sets with the radius~$0.2$ per seed, $800$ samples per input set, and the zonotope orders~$10$, $20$, $50$, $100$, and~$200$.
Increasing the order from~$20$ to~$200$ reduces the size of the zonotope enclosure after~$20$ time steps by about one order of magnitude, with little change between the orders~$100$ and~$200$.
However, the median of the first time step at which the zonotope enclosure is more than~$100$ times larger than the sampled reachable set does not change for either system.
These results only hold for the trained neural networks; enclosing the states of the physical system would additionally require taking the error of the model into account.

\end{document}